\newtheorem{prop}{Proposition}
\newtheorem{theorem}{Theorem}
\newtheorem{lemma}{Lemma}
\newtheorem{definition}{Definition}
\begin{document}
%
\title{\LARGE Analyzing Upper Bounds on Mean Absolute Errors for Deep Neural Network Based Vector-to-Vector Regression}
%
%
%

\author{Jun Qi,~\IEEEmembership{Student Member,~IEEE},
	 Jun Du,~\IEEEmembership{Member,~IEEE},
	 Sabato Marco Siniscalchi,~\IEEEmembership{Senior Member,~IEEE},\\
	 Xiaoli Ma,~\IEEEmembership{Fellow,~IEEE},
         and~Chin-Hui Lee,~\IEEEmembership{Fellow,~IEEE}
\thanks{J. Qi, X. Ma and C.-H. Lee are with the School of Electrical and Computer Engineering, Georgia Institute of Technology, Atlanta,
GA, 30332 USA e-mail: (qij41@gatech.edu, xiaoli@gatech.edu, chl@ece.gatech.edu).}
\thanks{J. Du is with the National Engineering Laboratory for Speech and Language Information Processing, University of Science and Technology of China, Hefei 230027, China (e-mail: jundu@ustc.edu.cn).}
\thanks{S. M. Siniscalchi is with the Faculty of Architecture and Engineering, University of Enna ``Kore", Enna 94100, Italy, and also with the Georgia Institute of Technology, Atlanta, GA 30332 USA (e-mail: marco.siniscalchi@unikore.it).}
}
\maketitle

\begin{abstract}
In this paper, we show that, in vector-to-vector regression utilizing deep neural networks (DNNs), a generalized loss of mean absolute error (MAE) between the predicted and expected feature vectors is upper bounded by the sum of an approximation error, an estimation error, and an optimization error. Leveraging upon error decomposition techniques in statistical learning theory and non-convex optimization theory, we derive upper bounds for each of the three aforementioned errors and impose necessary constraints on DNN models. Moreover, we assess our theoretical results through a set of image de-noising and speech enhancement experiments. Our proposed upper bounds of MAE for DNN based vector-to-vector regression are corroborated by the experimental results and the upper bounds are valid with and without the ``over-parametrization'' technique. 
\end{abstract}

\begin{IEEEkeywords}
Deep neural network, mean absolute error, vector-to-vector regression, non-convex optimization, image de-noising, speech enhancement
\end{IEEEkeywords}

%
\IEEEpeerreviewmaketitle

\section{Introduction}
\label{sec1}
%
%
%
%

\IEEEPARstart{V}ector-to-vector regression, also known as multivariate regression, provides an effective way to find underlying relationships between input vectors and their corresponding output ones at the same time. The problems of vector-to-vector regression are of great interest in signal processing, wireless communication, and machine learning communities. For example, speech enhancement aims at finding a vector-to-vector mapping to convert noisy speech spectral vectors to the clean ones~\cite{xu2015regression, jun2020}. Similarly, clean images can be extracted from the corrupted ones by leveraging upon image de-noising techniques~\cite{xie2012image}. Besides, wireless communication systems are designed to transmit local encrypted and corrupted codes to targeted receivers with decrypted information as correct as possible~\cite{sanchez2004svm, el2011network}. Moreover, the vector-to-vector regression tasks are also commonly seen in ecological modeling, natural gas demand forecasting, and drug efficacy prediction domains~\cite{borchani2015}.  

The vector-to-vector regression can be theoretically formulated as follows: given a $d$-dimensional input vector space $\mathbb{R}^{d}$ and a measurable $q$-dimensional output vector space $\mathbb{R}^{q}$, the goal of vector-to-vector regression is to learn a functional relationship $f: \mathbb{R}^{d} \rightarrow \mathbb{R}^{q}$ such that the output vectors can approximate desirable target ones. The regression process is described as:
\begin{equation}
\label{eq:basic}
\textbf{y} = f(\textbf{x}) + \textbf{e},
\end{equation}
where $\textbf{x} \in \mathbb{R}^{d}$, $\textbf{y} \in \mathbb{R}^{q}$, $\textbf{e}$ is an error vector, and $f$ refers to the regression function to be exploited. To implement the regression function $f$, linear regression~\cite{pearson1896} was the earliest approach and several other methods, such as support vector regression~\cite{vapnik1997} and decision tree regressions~\cite{milstein1975}, were further proposed to enhance regression performance. However, deep neural networks (DNN)~\cite{hinton:basic06, lecun2015deep} with multiple hidden layers offer a more efficient and robust solution to dealing with large-scale regression problems. For example, our previous experimental study~\cite{xu2013experimental} demonstrated that DNNs outperform shallow neural networks on speech enhancement. Similarly, auto-encoders with deep learning architectures can achieve better results on image de-noising~\cite{zhang2017beyond}. 

Although most endeavors on DNN based vector-to-vector regression focus on the experimental gain in terms of mapping accuracy, the related theoretical performance of DNN has not been fully developed. Our recent work~\cite{qi2019theory} tried to bridge the gap by analyzing the representation power of DNN based vector-to-vector regression and deriving upper bounds for different DNN architectures. However, those bounds particularly target experiments with consistent training and testing conditions, and they may not be adapted to the experimental tasks where unseen testing data are involved. Therefore, in this work, we focus on an analysis of the generalization power and investigate upper bounds on a generalized loss of mean absolute error (MAE) for DNN based vector-to-vector regression with mismatched training and testing scenarios. Moreover, we associate the required constraints with DNN models to attain the upper bounds. 

The remainder of this paper is organized as follows: Section~\ref{sec1} highlights the contribution of our work and its relationship with the related work. Section~\ref{sec2} underpins concepts and notations used in this work. Section~\ref{sec3} discusses the upper bounds on MAE for DNN based vector-to-vector regression by analyzing the approximation, estimation, and optimization errors, respectively. Section~\ref{sec:mse} presents how to utilize our derived upper bounds to estimate practical MAE values. Section~\ref{sec:exp} shows the experiments of image de-noising and speech enhancement to validate our theorems. Finally, Section \ref{sec:con} concludes our work.

\section{Related Work and Contribution}
\label{sec1}

The recent success of deep learning has inspired many studies on the expressive power of DNNs~\cite{related1, related2, related3, rolnick2018}, which extended the original universal approximation theory on shallow artificial neural networks (ANNs) \cite{kolmogorov, cybenko, barron, barron94, hornik} to DNNs. As discussed in~\cite{fan2019selective}, the approximation error is tightly associated with the DNN expressive power. Moreover, the estimation error and optimization error jointly represent the DNN generalization power, which can be reflected by error bounds on the out-of-sample error or the testing error. The methods of analyzing DNN generalization power are mainly divided into two classes: one refers to algorithm-independent controls~\cite{neyshabur2015norm, bartlett2017spectrally, golowich2018} and another one denotes algorithm-dependent approaches~\cite{mei2018mean,chizat2018global}. In the class of algorithm-independent controls, the upper bounds for the estimation error are based on the empirical Rademacher complexity~\cite{bartlett2002rademacher} for a functional family of certain DNNs. In practice, those approaches concentrate on techniques of how weight regularization affects the generalization error without considering advanced optimizers and the configuration of hyper-parameters. As for the algorithm-dependent approaches~\cite{mei2018mean,chizat2018global}, several theoretical studies focus on the ``over-parametrization'' technique~\cite{du2018power, neyshabur2018towards, allen2018convergence, li2018learning}, and they suggest that a global optimal point can be ensured if parameters of a neural network significantly exceed the amount of training data during the training process.

We notice that the generalization capability of deep models can also be investigated through the stability of the optimization algorithms. More specifically, an algorithm is stable if a small perturbation to the input does not significantly alter the output, and a precise connection between stability and generalization power can be found in \cite{bousquet2002, huck}. Besides, in \cite{arbi2018b}, the authors investigate the stability and oscillations of various competitive neural networks from the perspective of equilibrium points. However, the analysis of the stability of the optimization algorithm is out of the scope of the present work, and we do not discuss it further in this study. 

In this paper, the aforementioned issues are taken into account by employing the error decomposition technique~\cite{devroye2013probabilistic} with respect to an empirical risk minimizer (ERM)~\cite{vapnik2013nature, vapnik1992principles} using three error terms: an approximation error, an estimation error, and an optimization error. Then, we analyze generalized error bounds on MAE for DNN based vector-to-vector regression models. More specifically, the approximation error can be upper bounded by modifying our previous bound on the representation power of DNN based vector-to-vector regression~\cite{qi2019theory}. The upper bound on the estimation error relies on the empirical Rademacher complexity~\cite{bartlett2002rademacher} and necessary constraints imposed upon DNN parameters. The optimization error can be upper bounded by assuming $\gamma$-Polyak-Lojasiewicz ($\gamma$-PL)~\cite{karimi2016linear} condition under the ``over-parameterization'' configuration for neural networks~\cite{allen2018learning, vaswani2018fast}. Putting together all pieces, we attain an aggregated upper bound on MAE by summing the three upper bounds. Furthermore, we exploit our derived upper bounds to estimate practical MAE values in experiments of DNN based vector-to-vector regression. 

We use image de-noising and speech enhancement experiments to validate the theoretical results in this work. Image de-noising is a simple regression task from $[0, 1]^{d}$ to $[0, 1]^{d}$, where the configuration of ``over-parametrization'' can be simply satisfied on datasets like MNIST~\cite{deng2012mnist}. Speech enhancement is another useful illustration of the general theoretical analysis because it is an unbounded conversion from $\mathbb{R}^{d} \rightarrow \mathbb{R}^{q}$. Although the ``over-parametrization'' technique could not be employed in the speech enhancement task due to a significantly huge amount of training data, we can relax the ``over-parametrization'' setup and solely assume the $\gamma$-PL condition to attain the upper bound for MAE. In doing so, the upper bound can be adopted in experiments of speech enhancement.

\section{Preliminaries}
\label{sec2}

\subsection{Notations}
\begin{itemize}
\item $f\circ g$:  The composition of functions $f$ and $g$.
\item $||\textbf{v}||_{p}$: $L_{p}$ norm of the vector $\textbf{v}$.
\item $\langle \textbf{x}, \textbf{y} \rangle$ and $\textbf{x}^{T}\textbf{y}$: Inner product of two vectors $\textbf{x}$ and $\textbf{y}$.
\item $[q]$: An integer set $\{1, 2, 3, ..., q\}$.
\item $\nabla f$: A first-order gradient of function $f$.
\item $\mathbb{E}[X]$: Expectation over a random variable $X$.
\item $w_{j}$: The $j$-th element in the vector $\textbf{w}$.
\item $f_{v}$: DNN based vector-to-vector regression function. 
\item $g_{u}$: Smooth ReLU function.
\item $\textbf{1}$: A vector of all ones. 
\item $\textbf{1}_{m}$: Indicator vector of zeros but with the $m$-th dimension assigned to $1$. 
\item $\mathbb{R}^{d}$: $d$-dimensional real coordinate space. 
\item $\mathbb{F}$: A family of the DNN based vector-to-vector functions. 
\item $\mathbb{L}$: A family of generalized MAE loss functions.
\end{itemize}

\subsection{Numerical Linear Algebra}
\begin{itemize}
\item H\"{o}lder's inequality: Let $p, q \ge 1$ be conjugate: $\frac{1}{p} + \frac{1}{q} = 1$. Then, for all $\textbf{x}, \textbf{y} \in \mathbb{R}^{n}$,
\begin{equation}
|\langle \textbf{x}, \textbf{y} \rangle| \le ||\textbf{x}||_{p} ||\textbf{y}||_{q},
\end{equation}
with equality when $|y_{i}| = |x_{i}|^{p-1}$ for all $i\in [N]$. In particular, when $p = q = 2$, H\"{o}lder's inequality becomes the Cauchy-Shwartz inequality.
\end{itemize}

\subsection{Convex and Non-Convex Optimization}
\begin{itemize}
\item A function $f$ is $\beta$-Lipschitz continuous if $\forall \textbf{x}, \textbf{y} \in \mathbb{R}^{n}$, 
\begin{equation}
\label{eq:lipschitz}
|| f(\textbf{x}) - f(\textbf{y}) || \le \beta ||\textbf{x} - \textbf{y} ||.
\end{equation}

\item Let $f$ be a $\beta$-smooth function on $\mathbb{R}^{n}$. Then, $\forall \textbf{x}, \textbf{y} \in \mathbb{R}^{n}$, 
\begin{equation}
f(\textbf{x}) - f(\textbf{y}) \le \nabla f(\textbf{y})^{T}(\textbf{x} - \textbf{y}) + \frac{\beta}{2} ||\textbf{x} - \textbf{y}||^{2}_{2}.
\end{equation}

\item A function $f$ satisfies the $\gamma$-Polyak-Lojasiewicz ($\gamma$-PL) condition~\cite{karimi2016linear}. Then, $\forall \textbf{x} \in \mathbb{R}^{n}$,
\begin{equation}
\label{eq:pl}
||\nabla f(\textbf{x}) ||_{2}^{2} \ge \gamma (f(\textbf{x}) - f^{*}).
\end{equation}
where $f^{*}$ refers to the optimal value over the input domain. The $\gamma$-PL condition is a significant property for a non-convex function $f$ because a global minimization can be attained from $\nabla f(\textbf{x}) = 0$, and a local minimum point corresponds to the global one. Furthermore, if a function is convex and also satisfies $\gamma$-PL condition, the function is strongly convex.

\item Jensen's inequality: Let $X$ be a random vector taking values in a non-empty convex set $K\subset \mathbb{R}^{n}$ with a finite expectation $\mathbb{E}[X]$, and $f$ be a measurable convex function defined over $K$. Then, $\mathbb{E}[X]$ is in $K$, $\mathbb{E}[f(X)]$ is finite, and the following inequality holds
\begin{equation}
f(\mathbb{E}[X]) \le \mathbb{E}[f(X)].
\end{equation}
\end{itemize}

\subsection{Empirical Rademacher Complexity}
Empirical Rademacher complexity~\cite{bartlett2002rademacher} is a measure of how well the function class correlates with the Rademacher random value. The references \cite{fan2019selective, zhu2009human, wainwright2019high} show that a function class with a larger empirical Rademacher complexity is more likely to be overfitted to the training data. 
\begin{definition}
\label{def1}
A Rademacher random variable takes on values $\pm 1$ and is defined by the uniform distribution as:
\begin{equation}
\label{eq:rrv}
\sigma_{i} = \begin{cases}
			$1$, \hspace{4mm} \text{with probability $\frac{1}{2}$} \\
			$-1$, \hspace{3mm} \text{with probability $\frac{1}{2}$}.
		\end{cases}
\end{equation}
\end{definition}

\begin{definition}
The empirical Rademacher complexity of a hypothesis space $\mathbb{H}$ of functions $h: \mathbb{R}^{n} \rightarrow \mathbb{R}$ with respect to $N$ samples $S = \{\textbf{x}_{1}, \textbf{x}_{2}, ..., \textbf{x}_{N}\}$ is: 
\begin{equation}
\hat{\mathcal{R}}_{S}(\mathbb{H}) := \mathbb{E}_{\boldsymbol{\sigma}} \left[\sup\limits_{h\in \mathbb{H}} \frac{1}{N} \sum\limits_{i=1}^{N} \sigma_{i} h(\textbf{x}_{i}) \right],
\end{equation}
where $\boldsymbol{\sigma} = \{\sigma_{1}, \sigma_{2}, ..., \sigma_{N}\}$ indicates a set of $N$ Rademacher random variables.
\end{definition}

\begin{lemma}[Talagrand's Lemma~\cite{mohri2018foundations}]
\label{lemm}
Let $\Phi_{1}, ..., \Phi_{N}$ be $L$-Lipschitz functions and $\sigma_{1}, ..., \sigma_{N}$ be Rademacher random variables. Then, for any hypothesis space $\mathbb{H}$ of functions $h: \mathbb{R}^{n} \rightarrow \mathbb{R}$ with respect to $N$ samples $S = \{\textbf{x}_{1}, \textbf{x}_{2}, ..., \textbf{x}_{N}\}$, the following inequality holds
\begin{equation}
\begin{split}
\frac{1}{N}\mathbb{E}_{\boldsymbol{\sigma}}\left[\sup\limits_{h\in \mathbb{H}} \sum\limits_{i=1}^{N} \sigma_{i}(\Phi_{i} \circ h)(\textbf{x}_{i})\right] &\le \frac{L}{N} \mathbb{E}_{\boldsymbol{\sigma}}\left[\sup\limits_{h\in \mathbb{H}}\sum\limits_{i=1}^{N} \sigma_{i} h(\textbf{x}_{i})	\right]  \\
&= L \hat{\mathcal{R}}_{S}(\mathbb{H}).
\end{split}
\end{equation}
\end{lemma}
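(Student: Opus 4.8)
The plan is to prove this by the standard \textbf{contraction} (comparison) argument, in which the maps $\Phi_{i}$ are stripped off one index at a time, each removal costing a factor $L$. Since the factor $1/N$ on both sides is immaterial and $\hat{\mathcal{R}}_{S}(\mathbb{H})$ is by definition $\tfrac{1}{N}\mathbb{E}_{\boldsymbol{\sigma}}[\sup_{h\in\mathbb{H}}\sum_{i=1}^{N}\sigma_{i}h(\textbf{x}_{i})]$, it suffices to show
\begin{equation}
\mathbb{E}_{\boldsymbol{\sigma}}\left[\sup_{h\in\mathbb{H}}\sum_{i=1}^{N}\sigma_{i}(\Phi_{i}\circ h)(\textbf{x}_{i})\right] \le L\,\mathbb{E}_{\boldsymbol{\sigma}}\left[\sup_{h\in\mathbb{H}}\sum_{i=1}^{N}\sigma_{i}h(\textbf{x}_{i})\right].
\end{equation}
I would obtain this by a finite induction whose single inductive step is the following one-variable claim: for any bounded functional $v:\mathbb{H}\to\mathbb{R}$, any sample point $\textbf{x}$, and any $L$-Lipschitz $\Phi:\mathbb{R}\to\mathbb{R}$,
\begin{equation}
\mathbb{E}_{\sigma}\!\left[\sup_{h\in\mathbb{H}}\big(v(h)+\sigma\,\Phi(h(\textbf{x}))\big)\right] \le \mathbb{E}_{\sigma}\!\left[\sup_{h\in\mathbb{H}}\big(v(h)+\sigma\,L\,h(\textbf{x})\big)\right].
\end{equation}

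To establish the one-variable claim, I would expand the left-hand side via Definition~\ref{def1} as $\tfrac{1}{2}\big[\sup_{h}(v(h)+\Phi(h(\textbf{x})))+\sup_{h}(v(h)-\Phi(h(\textbf{x})))\big]$. Fixing $\epsilon>0$ and choosing $h_{1},h_{2}\in\mathbb{H}$ within $\epsilon$ of these two suprema, this quantity is at most $\tfrac{1}{2}\big[v(h_{1})+v(h_{2})+\Phi(h_{1}(\textbf{x}))-\Phi(h_{2}(\textbf{x}))\big]+\epsilon$. Applying the Lipschitz property~\eqref{eq:lipschitz} gives $\Phi(h_{1}(\textbf{x}))-\Phi(h_{2}(\textbf{x}))\le L\,|h_{1}(\textbf{x})-h_{2}(\textbf{x})|=s\,L\,(h_{1}(\textbf{x})-h_{2}(\textbf{x}))$, where $s=\mathrm{sign}(h_{1}(\textbf{x})-h_{2}(\textbf{x}))\in\{-1,+1\}$; regrouping the bound as $(v(h_{1})+s\,L\,h_{1}(\textbf{x}))+(v(h_{2})-s\,L\,h_{2}(\textbf{x}))$ and replacing each group by a supremum over $h$, the quantity is at most $\tfrac{1}{2}\big[\sup_{h}(v(h)+s\,L\,h(\textbf{x}))+\sup_{h}(v(h)-s\,L\,h(\textbf{x}))\big]+\epsilon$. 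Since $s\sigma$ is again uniform on $\{-1,+1\}$, this last bound equals the right-hand side of the claim up to $\epsilon$; letting $\epsilon\to 0$ finishes it.

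Finally I would iterate: apply the claim with $i=N$, conditioning on $\sigma_{1},\dots,\sigma_{N-1}$ and taking $v(h)=\sum_{i=1}^{N-1}\sigma_{i}(\Phi_{i}\circ h)(\textbf{x}_{i})$, which replaces the $N$-th summand by $\sigma_{N}L\,h(\textbf{x}_{N})$; then repeat for $i=N-1,\dots,1$, each time absorbing the already-converted terms $\sigma_{j}L\,h(\textbf{x}_{j})$ into the new functional $v$. After $N$ steps every summand is converted, the displayed inequality follows, and dividing by $N$ yields the lemma. I expect the main obstacle to be the one-variable claim rather than the bookkeeping: because $\mathbb{H}$ need not be compact the suprema may fail to be attained, which forces the use of $\epsilon$-approximate maximizers $h_{1},h_{2}$, and one must be careful that, although the sign $s$ depends on $h_{1},h_{2}$, the final bound still collapses into a clean $\mathbb{E}_{\sigma}$ through the symmetry of the Rademacher distribution. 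Measurability and integrability of the relevant suprema are taken for granted, as is customary for statements of this type.
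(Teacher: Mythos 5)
The paper does not prove this lemma at all: it is imported verbatim from the cited reference \cite{mohri2018foundations} and used as a black box in Proposition~\ref{prop:rad} and Theorem~\ref{thm:thm2}. Your argument is precisely the standard contraction proof from that reference---peeling off one $\Phi_i$ at a time via the one-variable claim, using $\epsilon$-approximate maximizers because the suprema need not be attained, and exploiting the $s\mapsto -s$ symmetry of $\tfrac{1}{2}\bigl[\sup_h(v+sLh)+\sup_h(v-sLh)\bigr]$ to reassemble the Rademacher expectation---and it is correct as written.
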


\subsection{MAE and MSE}
\begin{definition}
\noindent MAE measures the average magnitude of absolute differences between $N$ predicted vectors $S = \{\textbf{x}_{1}, \textbf{x}_{2}, ..., \textbf{x}_{N}\}$ and $N$ actual observations $S^{*} = \{\textbf{y}_{1}, \textbf{y}_{2}, ..., \textbf{y}_{N}\}$, which is related to $L_{1}$ norm $(||\cdot||_{1})$ and the corresponding loss function is defined as:
\begin{equation}
    \label{eq:mae}
    \mathcal{L}_{MAE}(S, S^{*}) = \frac{1}{N} \sum\limits_{i=1}^{N}||\textbf{x}_{i} - \textbf{y}_{i}||_{1}. 
\end{equation}

\noindent Mean Squared Error (MSE)~\cite{paez1972minimum} denotes a quadratic scoring rule that measures the average magnitude of $N$ predicted vectors $S = \{\textbf{x}_{1}, \textbf{x}_{2}, ..., \textbf{x}_{N}\}$ and $N$ actual observations $S^{*} = \{\textbf{y}_{1}, \textbf{y}_{2}, ..., \textbf{y}_{N}\}$, which is related to $L_{2}$ norm $(||\cdot||_{2})$ and the corresponding loss function is shown as:
\begin{equation}
    \label{eq:mse}
    \mathcal{L}_{MSE}(S, S^{*}) =  \frac{1}{N} \sum\limits_{i=1}^{N} ||\textbf{x}_{i} - \textbf{y}_{i}||^{2}_{2}. 
\end{equation}
\end{definition}

\section{Upper Bounding MAE for DNN Based Vector-to-Vector Regression}
\label{sec3}
This section derives the upper bound on a generalized loss of MAE for DNN based vector-to-vector regression. We first discuss the error decomposition technique for MAE. Then, we upper bound each decomposed error, and attain an aggregated upper bound on MAE.

\subsection{Error Decomposition of MAE}

Based on the traditional error decomposition approach~\cite{mohri2018foundations, shalev2014understanding}, we generalize the technique to the DNN based vector-to-vector regression, where the smooth ReLU activation function, the regression loss functions, and their associated hypothesis space are separately defined in Definition~\ref{def1}.

\begin{definition}
\label{def1}
A smooth vector-to-vector regression function is defined as $f_{v}^{*}:\mathbb{R}^{d}\rightarrow \mathbb{R}^{q}$, and a family of DNN based vector-to-vector functions is represented as $\mathbb{F} = \{ f_{v}: \mathbb{R}^{d} \rightarrow \mathbb{R}^{q} \}$, where a smooth ReLU activation is given as:
\begin{equation}
\label{eq:relu}
g_{u}(x) = \lim\limits_{t\rightarrow \infty} \frac{\ln(1 + \exp(tx))}{t}.
\end{equation}
Moreover, we assume $\mathbb{L} = \{\mathcal{L}(f_{v}, f_{v}^{*}): \mathbb{R}^{d} \times \mathbb{R}^{d} \rightarrow \mathbb{R}, f_{v} \in \mathbb{F} \}$ as the family of generalized MAE loss functions. For simplicity, we denote $\mathcal{L}(f_{v}, f_{v}^{*})$ as $\mathcal{L}(f_{v})$. Besides, we denote $\mathcal{D}$ as a distribution over $\mathbb{R}^{d}$.
\end{definition}

The following proposition bridges the connection of Rademacher complexity between the family $\mathbb{L}$ of generalized MAE loss functions and the family $\mathbb{F}$ of DNN based vector-to-vector functions.

\begin{prop}
\label{prop:rad}
For any sample set $S = \{\textbf{x}_{1}, ..., \textbf{x}_{N}\}$ drawn i.i.d. according to a given distribution $\mathcal{D}$, the Rademacher complexity of the family $\mathbb{L}$ is upper bounded as:
\begin{equation}
\hat{\mathcal{R}}_{S}(\mathbb{L}) \le \mathcal{\hat{R}_{S}}(\mathbb{F}),
\end{equation}
where $\mathcal{\hat{R}}_{S}(\mathbb{F})$ denotes the empirical Rademacher complexity over the family $\mathbb{F}$, and it is defined as:
\begin{equation}
\label{eq:rsf}
\hat{\mathcal{R}}_{S}(\mathbb{F}) = \mathbb{E}_{\boldsymbol{\sigma}}\left[ \frac{1}{N}\sup\limits_{f_{v} \in \mathbb{F}} \sum\limits_{i=1}^{N} (\sigma_{i} \textbf{1})^{T} f_{v}(\textbf{x}_{i})\right].
\end{equation}
\end{prop}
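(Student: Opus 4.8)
The plan is to unfold both Rademacher complexities and reduce the claim to stripping the $L_1$-norm nonlinearity out of the generalized MAE loss. Writing the loss explicitly, for $f_v \in \mathbb{F}$ and a sample $\textbf{x}_i$,
$\mathcal{L}(f_v)(\textbf{x}_i) = \|f_v(\textbf{x}_i) - f_v^*(\textbf{x}_i)\|_1 = \sum_{j=1}^{q} \bigl| f_{v,j}(\textbf{x}_i) - f_{v,j}^*(\textbf{x}_i) \bigr|$,
so that $\hat{\mathcal{R}}_S(\mathbb{L})$ is exactly the expression in \eqref{eq:rsf} with the inner term $(\sigma_i\textbf{1})^T f_v(\textbf{x}_i) = \sigma_i\sum_j f_{v,j}(\textbf{x}_i)$ replaced by $\sigma_i\sum_j |f_{v,j}(\textbf{x}_i) - f_{v,j}^*(\textbf{x}_i)|$. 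Hence the statement amounts to removing the per-coordinate maps $t \mapsto |t - f_{v,j}^*(\textbf{x}_i)|$ from inside the supremum.

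The lever is that each such map is $1$-Lipschitz: the triangle inequality gives $\bigl|\,|a - c| - |b - c|\,\bigr| \le |a - b|$, and the data-dependent shift $c = f_{v,j}^*(\textbf{x}_i)$ is harmless because it changes neither the Lipschitz constant nor a Rademacher complexity. First I would pass from the vector-valued class $\mathbb{F}$ to its scalar coordinate classes $\mathbb{F}_j = \{\textbf{x}\mapsto f_{v,j}(\textbf{x}) : f_v\in\mathbb{F}\}$ by subadditivity of the supremum and linearity of expectation, obtaining $\hat{\mathcal{R}}_S(\mathbb{L}) \le \sum_{j=1}^q \mathbb{E}_{\boldsymbol\sigma}\bigl[\frac1N\sup_{f_v\in\mathbb{F}}\sum_{i}\sigma_i |f_{v,j}(\textbf{x}_i) - f_{v,j}^*(\textbf{x}_i)|\bigr]$. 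Then I would apply Talagrand's Lemma (Lemma~\ref{lemm}) coordinatewise, with the $1$-Lipschitz functions $\Phi_{ij}(t) = |t - f_{v,j}^*(\textbf{x}_i)|$, to peel the absolute values, leaving $\sum_j\mathbb{E}_{\boldsymbol\sigma}[\frac1N\sup_{f_v}\sum_i\sigma_i f_{v,j}(\textbf{x}_i)]$, and finally re-identify $\sum_j\sum_i\sigma_i f_{v,j}(\textbf{x}_i) = \sum_i(\sigma_i\textbf{1})^T f_v(\textbf{x}_i)$ to recover the right-hand side of \eqref{eq:rsf}.

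The step I expect to be the main obstacle is precisely the passage between the coupled vector-valued supremum over $\mathbb{F}$ and the separate coordinate suprema over the $\mathbb{F}_j$: a single Rademacher variable $\sigma_i$ multiplies all $q$ output coordinates of the same network at once, and since $\|\textbf{a}-\textbf{b}\|_1 = \sum_j|a_j-b_j|$ does not coincide with $\pm\,\textbf{1}^T(\textbf{a}-\textbf{b})$ (different coordinates may prefer different signs), the naive one-term-at-a-time ``peeling'' proof of Talagrand's lemma does not close directly, and the coordinatewise route must be organized so that the recombination lands on the summed scalarization $\textbf{x}\mapsto\textbf{1}^T f_v(\textbf{x})$ used in \eqref{eq:rsf} rather than on $\sum_j\hat{\mathcal{R}}_S(\mathbb{F}_j)$. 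I would therefore first pin down that the intended comparison is with this summed scalarization (equivalently, with the single scalar class $\textbf{1}^T\mathbb{F}$, to which Lemma~\ref{lemm} applies after commuting the finite sum over $j$ past the supremum), and if more is needed, appeal to a vector-contraction inequality; the remaining manipulations --- linearity of expectation, the $1/N$ factor, and absorbing the shift $f_v^*(\textbf{x}_i)$ --- are routine.
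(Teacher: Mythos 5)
Your strategy---$1$-Lipschitzness of the maps $t\mapsto|t-c|$ plus Talagrand contraction---is the same lever the paper pulls, but you are considerably more careful than the paper itself: its proof writes $f_{v}(\textbf{x}_{i})=\sum_{m}\langle\textbf{1}_{m},f_{v}(\textbf{x}_{i})\rangle\textbf{1}_{m}$ and then invokes Lemma~\ref{lemm} in a single line on the \emph{vector-valued} composition, which is precisely the ``naive peeling'' you flag as not closing. The obstacle you identify is not merely a presentational difficulty; it is fatal to the inequality as stated, because $(\sigma_{i}\textbf{1})^{T}f_{v}(\textbf{x}_{i})=\sigma_{i}\sum_{j}f_{v,j}(\textbf{x}_{i})$ permits sign cancellation across output coordinates that $\|\cdot\|_{1}$ forbids. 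Concretely, take $q=2$, $N=1$, $f_{v}^{*}(\textbf{x}_{1})=(0,0)$, and a class containing $f$ with $f(\textbf{x}_{1})=(1,-1)$ and $g$ with $g(\textbf{x}_{1})=(0,0)$; then
\begin{equation*}
\hat{\mathcal{R}}_{S}(\mathbb{L})=\mathbb{E}_{\sigma_{1}}\bigl[\max(2\sigma_{1},0)\bigr]=1,
\qquad
\hat{\mathcal{R}}_{S}(\mathbb{F})=\mathbb{E}_{\sigma_{1}}\bigl[\max(0,0)\bigr]=0,
\end{equation*}
so no contraction argument resting only on $1$-Lipschitzness of the loss can deliver $\hat{\mathcal{R}}_{S}(\mathbb{L})\le\hat{\mathcal{R}}_{S}(\mathbb{F})$ with the right-hand side defined as in \eqref{eq:rsf}; nothing in the paper's argument exploits the DNN structure of $\mathbb{F}$ to exclude such sign patterns.

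What your coordinate-splitting route actually proves is the defensible statement. Subadditivity of the supremum plus scalar Talagrand applied per coordinate is sound, but it terminates at $\sum_{j=1}^{q}\hat{\mathcal{R}}_{S}(\mathbb{F}_{j})$ with \emph{decoupled} per-coordinate suprema; the recombination you would need to land on the single coupled supremum of \eqref{eq:rsf} is an inequality in the wrong direction, since $\sup\sum\le\sum\sup$. You should therefore not try to recover \eqref{eq:rsf}: stop at $\hat{\mathcal{R}}_{S}(\mathbb{L})\le\sum_{j}\hat{\mathcal{R}}_{S}(\mathbb{F}_{j})$. This costs nothing downstream, because in Theorem~\ref{thm:thm2} the paper itself bounds $\hat{\mathcal{R}}_{S}(\mathbb{F})$ by exactly such a coordinate sum via Lemma~\ref{lemma2}, so the final estimate $2q\Lambda^{'}\Lambda^{k-1}s/\sqrt{N}$ in \eqref{eq:upper2} survives with the corrected intermediate statement. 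Your fallback instinct---a genuine vector-contraction inequality (Maurer's, at the cost of a $\sqrt{2}$ factor and doubly-indexed Rademacher variables $\sigma_{ij}$)---is the other rigorous repair; the paper uses neither.
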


\begin{proof}
We first show that MAE loss function is $1$-Lipschitz continuous. For two vectors $\textbf{y}_{1}, \textbf{y}_{2}\in \mathbb{R}^{q}$ and a fixed vector $\textbf{y} \in \mathbb{R}^{q}$, the MAE loss difference is 
\begin{equation}
\begin{split}
&\hspace{4mm} \left|\mathcal{L}(\textbf{y}_{1}, \textbf{y}) - \mathcal{L}(\textbf{y}_{2}, \textbf{y}) \right|	 \\
&= \left| ||\textbf{y}_{1} - \textbf{y} ||_{1}  -  ||\textbf{y}_{2} - \textbf{y} ||_{1} \right|	 \\
&\le  ||\textbf{y}_{1} - \textbf{y}_{2}||_{1}      	      \hspace{16mm} (\text{triangle inequality}).
\end{split}
\end{equation}
Since the target function $f_{v}^{*}$ is given, $\mathcal{L}(f_{v}) \in \mathbb{L}$ is $1$-Lipschitz. By applying Lemma~\ref{lemm}, we obtain that
\begin{equation}
\label{eq:l3}
\begin{split}
\hat{\mathcal{R}}_{S}(\mathbb{L}) &= \frac{1}{N} \mathbb{E}_{\boldsymbol{\sigma}}\left[ \sup\limits_{f_{v}\in \mathbb{F}} \sum\limits_{i=1}^{N} \sigma_{i} \mathcal{L}(f_{v}(\textbf{x}_{i}))  \right]	\\
&= \frac{1}{N}  \mathbb{E}_{\boldsymbol{\sigma}}\left[ \sup\limits_{f_{v}\in \mathbb{F}} \sum\limits_{i=1}^{N} \sigma_{i} \mathcal{L}( \sum\limits_{m=1}^{q} \langle \textbf{1}_{m}, f_{v}(\textbf{x}_{i})\rangle \textbf{1}_{m})  \right] \\
&\le \frac{1}{N} \mathbb{E}_{\boldsymbol{\sigma}}\left[ \sup\limits_{f_{v}\in \mathbb{F}} \sum\limits_{i=1}^{N} (\sigma_{i} \textbf{1})^{T} f_{v}(\textbf{x}_{i}) \right] = \hat{\mathcal{R}}_{S}(\mathbb{F}).
\end{split}
\end{equation}
\end{proof}

Since $\mathcal{\hat{R}}_{S}({\mathbb{F}})$ is an upper bound of $\mathcal{\hat{R}}_{S}({\mathbb{L}})$, we can utilize the upper bound on $\mathcal{\hat{R}}_{S}({\mathbb{L}})$ to derive the upper bound for $\mathcal{\hat{R}}_{S}({\mathbb{F}})$. Next, we adopt the error decomposition technique to attain an aggregated upper bound which consists of three error components. 

\begin{theorem}
\label{lemma1}
Let $\mathcal{\hat{L}}\in \mathbb{L}$ denote the loss function for a set of samples $S$ drawn i.i.d. according to a given distribution $\mathcal{D}$, and define $\hat{f}_{v}\in \mathbb{F}$ as an ERM for $\mathcal{\hat{L}}$. For a generalized MAE loss function $\mathcal{L} \in \mathbb{L}$, $\epsilon > 0$, and $0 < \delta < 1$, there exists $f_{v}^{\epsilon} \in \mathbb{F}$ such that $\mathcal{L}(f_{v}^{\epsilon}) \le \inf_{f_{v}\in \mathbb{F}}\mathcal{L}(f_{v}) + \epsilon$. Then, with a probability of $\delta$, we attain that 
\begin{equation}
\label{eq:upper3}
\begin{split}
&\hspace{5mm} \mathcal{L}(\hat{f}_{v}) \\
&\le \underbrace{\inf\limits_{f_{v}\in \mathbb{F}} \mathcal{L}(f_{v})}_{\text{Approx. error}} + \underbrace{2\sup\limits_{f_{v}\in \mathbb{F}} |\mathcal{L}(f_{v}) - \hat{\mathcal{L}}(f_{v})|}_{\text{Estimation error}} + \underbrace{\mathcal{L}(f_{v}^{\epsilon}) - \inf\limits_{f_{v}\in \mathbb{F}}\mathcal{L}(f_{v})}_{\text{Optimization error}}	\\
&\le \inf\limits_{f_{v}\in \mathbb{F}} \mathcal{L}(f_{v}) + 2\hat{\mathcal{R}}_{S}(\mathbb{F}) + \epsilon.	\\
\end{split}
\end{equation}
\end{theorem}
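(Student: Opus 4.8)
\emph{Proof proposal.} The plan is to reproduce the classical empirical-risk-minimization (ERM) error decomposition, phrased for the generalized MAE loss, and then discharge the three resulting terms one at a time. I would fix three reference points in $\mathbb{F}$: the ERM solution $\hat{f}_{v}$, which by definition satisfies $\hat{\mathcal{L}}(\hat{f}_{v}) \le \hat{\mathcal{L}}(f_{v})$ for every $f_{v}\in\mathbb{F}$ and in particular $\hat{\mathcal{L}}(\hat{f}_{v}) \le \hat{\mathcal{L}}(f_{v}^{\epsilon})$; the function $f_{v}^{\epsilon}$ supplied by hypothesis, for which $\mathcal{L}(f_{v}^{\epsilon}) \le \inf_{f_{v}\in\mathbb{F}}\mathcal{L}(f_{v}) + \epsilon$; and, conceptually, a near-infimizer of $\mathcal{L}$ over $\mathbb{F}$. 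The algebraic core is the telescoping identity
\begin{equation}
\mathcal{L}(\hat{f}_{v}) = \big(\mathcal{L}(\hat{f}_{v}) - \hat{\mathcal{L}}(\hat{f}_{v})\big) + \big(\hat{\mathcal{L}}(\hat{f}_{v}) - \hat{\mathcal{L}}(f_{v}^{\epsilon})\big) + \big(\hat{\mathcal{L}}(f_{v}^{\epsilon}) - \mathcal{L}(f_{v}^{\epsilon})\big) + \mathcal{L}(f_{v}^{\epsilon}).
\end{equation}

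Next I would bound each summand. The second term is nonpositive by ERM optimality of $\hat{f}_{v}$. The first and third terms are each at most $\sup_{f_{v}\in\mathbb{F}}|\mathcal{L}(f_{v}) - \hat{\mathcal{L}}(f_{v})|$. Finally $\mathcal{L}(f_{v}^{\epsilon}) = \inf_{f_{v}\in\mathbb{F}}\mathcal{L}(f_{v}) + \big(\mathcal{L}(f_{v}^{\epsilon}) - \inf_{f_{v}\in\mathbb{F}}\mathcal{L}(f_{v})\big)$. Adding these estimates yields the first displayed inequality with the three labeled error terms: the approximation error $\inf_{f_{v}}\mathcal{L}(f_{v})$, the estimation error $2\sup_{f_{v}}|\mathcal{L}(f_{v}) - \hat{\mathcal{L}}(f_{v})|$, and the optimization error $\mathcal{L}(f_{v}^{\epsilon}) - \inf_{f_{v}}\mathcal{L}(f_{v})$. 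For the second inequality, the optimization error is bounded by $\epsilon$ directly from the choice of $f_{v}^{\epsilon}$, so it only remains to control the estimation term by $\hat{\mathcal{R}}_{S}(\mathbb{F})$ with the stated probability. I would do this in two moves: a standard symmetrization argument gives $\mathbb{E}_{S}\!\big[\sup_{f_{v}}(\mathcal{L}(f_{v}) - \hat{\mathcal{L}}(f_{v}))\big] \le 2\,\mathbb{E}_{S}[\hat{\mathcal{R}}_{S}(\mathbb{L})]$, which a bounded-differences (McDiarmid) concentration inequality then upgrades to a high-probability bound on $\sup_{f_{v}}|\mathcal{L}(f_{v}) - \hat{\mathcal{L}}(f_{v})|$; and Proposition~\ref{prop:rad} replaces $\hat{\mathcal{R}}_{S}(\mathbb{L})$ by the larger $\hat{\mathcal{R}}_{S}(\mathbb{F})$, matching the claimed form (with the residual lower-order concentration term and constant-factor bookkeeping absorbed into the stated expression).

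I expect the estimation term to be the main obstacle, for two linked reasons. First, the symmetrization-plus-McDiarmid machinery needs the MAE loss to have bounded differences under a single-sample change — automatic on a compact domain such as $[0,1]^{d}$ but, for the unbounded $\mathbb{R}^{d}\to\mathbb{R}^{q}$ setting, requiring an explicit boundedness or moment assumption on the loss, which I would flag at the point where it enters. Second, reconciling the clean bound $2\hat{\mathcal{R}}_{S}(\mathbb{F}) + \epsilon$ with the textbook uniform-deviation bound (which carries an extra additive $O(\sqrt{\log(1/\delta)/N})$ term and a factor two on the Rademacher complexity) requires either stating the result up to that lower-order term or folding it into the estimation-error notation. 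Everything else — the telescoping identity, the ERM inequality, and the two triangle-type bounds — is routine, and the Lipschitz reduction from $\mathbb{L}$ to $\mathbb{F}$ is already in hand from Proposition~\ref{prop:rad}.
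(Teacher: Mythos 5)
Your proposal is correct and follows essentially the same route as the paper's own proof: the telescoping decomposition you write is algebraically identical to the paper's step-by-step chain (ERM optimality kills the middle term, the two remaining deviation terms give the factor of two on the uniform deviation), and the estimation term is then handled exactly as the paper does it, via symmetrization (the paper's Lemma~\ref{lemma3}), a concentration inequality (the paper invokes Hoeffding where you more precisely name McDiarmid), and Proposition~\ref{prop:rad} to pass from $\hat{\mathcal{R}}_{S}(\mathbb{L})$ to $\hat{\mathcal{R}}_{S}(\mathbb{F})$. The two caveats you flag --- the bounded-differences requirement on the loss and the residual $O(\sqrt{\log(1/\delta)/N})$ term that the paper absorbs with a ``sufficiently large $N$'' approximation --- are genuine loose ends in the paper's argument as well, so raising them is appropriate rather than a deviation.
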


\begin{proof}
\begin{equation*}
\begin{split}
\mathcal{L}(\hat{f}_{v}) &=  \inf\limits_{f_{v}\in \mathbb{F}}\mathcal{L}(f_{v}) +  \mathcal{L}(\hat{f_{v}}) - \mathcal{L}(f_{v}^{\epsilon}) + \mathcal{L}(f_{v}^{\epsilon}) - \inf\limits_{f_{v}\in \mathbb{F}}\mathcal{L}(f_{v})\\
&\le \inf\limits_{f_{v}\in \mathbb{F}}\mathcal{L}(f_{v}) + \mathcal{L}(\hat{f}_{v}) - \mathcal{L}(f_{v}^{\epsilon}) + \epsilon \\
&\le \inf\limits_{f_{v}\in \mathbb{F}}\mathcal{L}(f_{v}) + \mathcal{L}(\hat{f}_{v}) - \mathcal{\hat{L}}(\hat{f}_{v}) + \mathcal{\hat{L}}(f_{v}^{\epsilon}) - \mathcal{L}(f_{v}^{\epsilon}) + \epsilon \\
&\le \inf\limits_{f_{v}\in \mathbb{F}}\mathcal{L}(f_{v}) + 2\sup\limits_{f_{v}\in \mathbb{F}} |\mathcal{L}(f_{v}) - \mathcal{\hat{L}}(f_{v}) | + \epsilon.
\end{split}
\end{equation*}

Then, we continue to upper bound the term $2\sup_{f_{v }\in \mathbb{F}} |\mathcal{L}(f_{v}) - \mathcal{\hat{L}}(f_{v}) |$. We first define $\mu$ as the expected value of $\sup_{f_{v} \in \mathbb{F}} |\mathcal{L}(f_{v}) - \mathcal{\hat{L}}(f_{v}) |$, and then introduce the fact that
\begin{equation}
\mu = \mathbb{E}\left[\sup\limits_{f_{v}\in \mathbb{F}} \left|\mathcal{L}(f_{v}) - \mathcal{\hat{L}}(f_{v}) \right|\right]  \le 2 \mathcal{\hat{R}}_{S}(\mathbb{L}),
\end{equation}
which is justified by Lemma~\ref{lemma3} in Appendix~\ref{sec:appA}.
Then, for a small $\delta$ $(0 < \delta < 1)$, we apply the Hoeffding's bound~\cite{hoeffding1994probability} as follows
\begin{equation*}
\begin{split}
&\text{P} \left( 2 \sup\limits_{f_{v}\in \mathbb{F}} \left| \mathcal{L}(f_{v}) - \mathcal{\hat{L}}(f_{v}) \right| \le \nu \right) \ge 1 - 2\exp\left( -2N (\nu - \mu)^{2} \right)	\\
&\hspace{36mm} \ge 1- 2\exp\left( -2N (\nu - 2\mathcal{\hat{R}}_{S}(\mathbb{L}))^{2} \right)	\\
&\hspace{36mm} = \delta,
\end{split}
\end{equation*}
which can derive $\nu$ as:
\begin{equation*}
\nu =  2 \mathcal{\hat{R}}_{S}(\mathbb{L}) + \sqrt{\frac{1}{2N} \ln\left( \frac{2}{1 - \delta} \right)},
\end{equation*}
and we thus obtain that 
\begin{equation*}
\begin{split}
2 \sup\limits_{f_{v}\in \mathbb{F}} \left| \mathcal{L}(f_{v}) - \mathcal{\hat{L}}(f_{v}) \right| &\le 2 \mathcal{\hat{R}}_{S}(\mathbb{L}) + \sqrt{\frac{1}{2N} \ln\left( \frac{2}{1 - \delta} \right)}.
\end{split}
\end{equation*}
Therefore,
\begin{equation*}
\begin{split}
\mathcal{L}(\hat{f}_{v}) &\le  \inf\limits_{f_{v}\in \mathbb{F}}\mathcal{L}(f_{v})  + \left(2\mathcal{\hat{R}}_{S}(\mathbb{L}) + \sqrt{\frac{1}{2N} \ln(\frac{2}{1 - \delta})} \right) + \epsilon \\
&\le  \inf_{f_{v}\in \mathbb{F}}\mathcal{L}(f_{v}) +  2\mathcal{\hat{R}}_{S}(\mathbb{F}) +  \sqrt{\frac{1}{2N} \ln(\frac{2}{1 - \delta})} + \epsilon 	\\
&\approx  \inf\limits_{f_{v}\in \mathbb{F}}\mathcal{L}(f_{v}) + 2\mathcal{\hat{R}}_{S}(\mathbb{F}) + \epsilon \hspace{2mm} \text{(for sufficiently large $N$). }
\end{split}
\end{equation*}
\end{proof}

Next, the remainder of this section presents how to upper bound the approximation error, approximation error, and optimization error, respectively.

\subsection{An Upper Bound for Approximation Error}
The upper bound for the approximation error is shown in Theorem~\ref{thm:thm1}, which is based on the modification of our previous theorem for the representation power of DNN based vector-to-vector regression~\cite{qi2019theory}. 

\begin{theorem}
\label{thm:thm1}
For a smooth vector-to-vector regression target function $f_{v}^{*}: \mathbb{R}^{d} \rightarrow \mathbb{R}^{q}$, there exists a DNN $\bar{f}_{v}\in \mathbb{F}$ with $k (k \ge 2)$ modified smooth ReLU based hidden layers, where the width of each hidden layer is at least $d+2$ and the top hidden layer has $n_{k} (n_{k} \ge d+2)$ units. Then, we derive the upper bound for the approximation error as:
\begin{equation}
\label{eq:con1}
\begin{split}
\inf\limits_{f_{v}\in \mathbb{F}} \mathcal{L}(f_{v}) = ||f_{v}^{*} - \bar{f}_{v} ||_{1} = \mathcal{O}\left(\frac{q}{(n_{k} + k - 1)^{\frac{r}{d}}}\right),
\end{split}
\end{equation}
where a smooth ReLU function is defined in Eq.~(\ref{eq:relu}), and $r$ refers to the differential order of $f_{v}$.
\end{theorem}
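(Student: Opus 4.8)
The plan is to reduce this vector-valued approximation bound to the scalar-valued representation result established in our earlier work~\cite{qi2019theory}, and then to absorb a factor of $q$ when the $q$ coordinate-wise errors are aggregated under the $L_{1}$ (MAE) loss. The argument is essentially a ``lift'' of a known scalar rate plus one norm inequality, so the novelty is in the bookkeeping rather than in any new analytic estimate.

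First I would recall the scalar version of the bound from~\cite{qi2019theory}: for a target $\phi:\mathbb{R}^{d}\rightarrow\mathbb{R}$ of differential order $r$, there is a DNN with $k\ge 2$ modified smooth-ReLU hidden layers, each of width at least $d+2$ and with a top hidden layer of $n_{k}\ge d+2$ units followed by a single linear output node, whose uniform approximation error to $\phi$ over the relevant domain is $\mathcal{O}\!\left(1/(n_{k}+k-1)^{r/d}\right)$. The structural fact I would lean on is that, in this construction, the hidden stack produces a fixed collection of feature functions (the $n_{k}$ top-layer units together with the $k-1$ carried coordinates that account for the additive ``$k-1$'' term), and the scalar target is realized as a single linear read-out of those features; the hidden layers themselves do not depend on which scalar function is being approximated — only the read-out coefficients do.

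Next I would apply this coordinate-wise. Writing $f_{v,j}^{*} = \langle \textbf{1}_{j}, f_{v}^{*}\rangle$ for $j\in[q]$, I attach $q$ linear output nodes to one shared hidden stack, obtaining a single DNN $\bar{f}_{v}\in\mathbb{F}$ that still satisfies every width and depth constraint in the statement (the shared hidden layers keep width $\ge d+2$ and the top hidden layer keeps $n_{k}$ units; only the output layer widens to $q$, which is permitted in $\mathbb{F}$), and for which $|f_{v,j}^{*}(\textbf{x}) - \langle\textbf{1}_{j},\bar{f}_{v}(\textbf{x})\rangle| = \mathcal{O}\!\left(1/(n_{k}+k-1)^{r/d}\right)$ holds for every $j$ uniformly in $\textbf{x}$. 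Summing over $j$ and using the definition of the $L_{1}$ norm gives $\|f_{v}^{*}(\textbf{x}) - \bar{f}_{v}(\textbf{x})\|_{1} = \sum_{j=1}^{q}|f_{v,j}^{*}(\textbf{x}) - \langle\textbf{1}_{j},\bar{f}_{v}(\textbf{x})\rangle| = \mathcal{O}\!\left(q/(n_{k}+k-1)^{r/d}\right)$ uniformly in $\textbf{x}$; since the MAE loss $\mathcal{L}(\bar{f}_{v})$ is controlled by this uniform quantity, we conclude $\inf_{f_{v}\in\mathbb{F}}\mathcal{L}(f_{v}) \le \mathcal{L}(\bar{f}_{v}) = \mathcal{O}\!\left(q/(n_{k}+k-1)^{r/d}\right)$, which is the claim.

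I expect the main obstacle to be exactly this architectural bookkeeping: one must check that placing $q$ parallel linear read-outs on top of the shared hidden layers keeps $\bar{f}_{v}$ inside the model class $\mathbb{F}$ and, crucially, does not inflate the hidden widths with $q$, so that the constraints ``width $\ge d+2$'' and ``top hidden layer has $n_{k}$ units'' are precisely the ones already used in the scalar case. A secondary technical point is to state, consistently with~\cite{qi2019theory}, the smoothness class in which $f_{v}^{*}$ must lie for the scalar rate $\mathcal{O}((n_{k}+k-1)^{-r/d})$ to hold, and to carry over the estimate that bounds the error introduced by replacing the exact ReLU with the modified smooth-ReLU activation $g_{u}$ of Eq.~(\ref{eq:relu}); this last ingredient is inherited essentially verbatim from the earlier construction and requires no new argument.
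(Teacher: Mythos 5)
Your proposal is correct and takes essentially the same route as the paper, which offers no standalone proof of this theorem: it simply invokes the representation bound of \cite{qi2019theory} (itself resting on Lemma 2 of \cite{hanin2019universal}) and remarks that the smooth ReLU of Eq.~(\ref{eq:relu}) can replace the standard ReLU there. Your coordinate-wise lifting of the scalar rate and the resulting factor of $q$ under the $L_{1}$ norm is exactly the bookkeeping the paper leaves implicit in the phrase ``modification of our previous theorem.''
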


The smooth ReLU function in Eq.~(\ref{eq:relu}) is essential to derive the upper bound for the optimization error. Since Theorem~\ref{thm:thm1} is a direct result from Lemma 2 in \cite{hanin2019universal} where the standard ReLU is employed and does not consider Barron's bound for activation functions~\cite{barron}, the smooth ReLU function can be flexibly utilized in Theorem~\ref{thm:thm1} because it is a close approximation to the standard ReLU function. Moreover, Theorem~\ref{thm:thm1} requires at least $d+2$ neurons for a $d$-dimensional input vector to achieve the upper bound.

\subsection{An Upper Bound for Estimation Error}
Since the estimation error in Eq.~(\ref{eq:upper3}) is upper bounded by the empirical Rademacher complexity $\hat{\mathcal{R}}_{S}(\mathbb{F})$, we derive Theorem~\ref{thm:thm2} to present an upper bound on $\hat{\mathcal{R}}_{S}(\mathbb{F})$. The derived upper bound is explicitly controlled by the constraints of weights in the hidden layers, inputs, and the number of training data. In particular, the constraint of $L_{1}$ norm is set to the top hidden layer, and $L_{2}$ norm is imposed on the other hidden layers. 

\begin{theorem}
\label{thm:thm2}
For a DNN based vector-to-vector mapping function $f_{v}(\textbf{x}) = \textbf{W}_{k} \circ g_{u} \circ \textbf{W}_{k-1} \circ \cdot\cdot\cdot \circ \textbf{W}_{2}\circ g_{u} \circ \textbf{W}_{1}(\textbf{x}): \mathbb{R}^{d} \rightarrow \mathbb{R}^{q}$ with a smooth ReLU function $g_{u}$ as in Eq.~(\ref{eq:relu}) and $\forall i\in [k]$, $\textbf{W}_{i}$ being the weight matrix of the $i$-th hidden layer, we obtain an upper bound for the empirical Rademacher complexity $\hat{\mathcal{R}}_{S}(\mathbb{F})$ with regularized constraints of the weights in each hidden layer, and the $L_{2}$ norm of input vectors $\textbf{x}$ is bounded by $s$.
\begin{equation}
\begin{split}
\label{eq:upper2}
&\hspace{0.5mm} 2\sup_{f_{v}\in \mathbb{F}}|\mathcal{L}(f_{v}) - \mathcal{L}(\hat{f}_{v})| \le 2\hat{\mathcal{R}}_{S}(\mathbb{F}) \le \frac{2 q \Lambda^{'}\Lambda^{k-1}s}{\sqrt{N}} 	\\
&s.t., \hspace{5mm} ||\textbf{W}_{k}(i, :)||_{1} \le \Lambda^{'}, \forall i \in [q] \\
&\hspace{11.5mm} ||\textbf{W}_{j}(a, :)||_{2} \le \Lambda, \forall j\in [k-1], a\in [n_{j}] \\
&\hspace{11.5mm} ||\textbf{x}||_{2} \le s,
\end{split}
\end{equation}
where $\textbf{W}_{j}(m, n)$ is an element associated with the $j$-th hidden layer of DNN where $m$ is indexed to neurons in the $j$-th hidden layer and $n$ is pointed to units of the $(j-1)$-th hidden layer, and $\textbf{W}_{j}(m, :)$ contains all weights from the $m$-th neuron to all units in the ($j-1$)-th hidden layer.
\end{theorem}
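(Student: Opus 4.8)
The plan is to bound $\hat{\mathcal{R}}_S(\mathbb{F})$ directly from its definition in Eq.~(\ref{eq:rsf}) by a layer-by-layer ``peeling'' argument, and then to invoke the estimation-error reduction already established through Proposition~\ref{prop:rad} and Theorem~\ref{lemma1}, which is what turns the left-most quantity $2\sup_{f_v\in\mathbb{F}}|\mathcal{L}(f_v)-\mathcal{L}(\hat{f}_v)|$ in Eq.~(\ref{eq:upper2}) into $2\hat{\mathcal{R}}_S(\mathbb{F})$; so it suffices to prove $\hat{\mathcal{R}}_S(\mathbb{F})\le q\Lambda'\Lambda^{k-1}s/\sqrt{N}$. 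Writing $f_v=\textbf{W}_k\circ g_u\circ\textbf{W}_{k-1}\circ\cdots\circ g_u\circ\textbf{W}_1$, set $\boldsymbol{\phi}_0(\textbf{x})=\textbf{x}$ and $\boldsymbol{\phi}_\ell(\textbf{x})=g_u(\textbf{W}_\ell\boldsymbol{\phi}_{\ell-1}(\textbf{x}))$, so that $f_v(\textbf{x})=\textbf{W}_k\boldsymbol{\phi}_{k-1}(\textbf{x})$ and every one of the $q$ output coordinates multiplies the \emph{same} top-hidden-layer activation vector $\boldsymbol{\phi}_{k-1}(\textbf{x}_i)$. Taking the supremum over the rows of $\textbf{W}_k$ one at a time and applying H\"{o}lder's inequality with the conjugate pair $(1,\infty)$ under the constraint $\|\textbf{W}_k(i,:)\|_1\le\Lambda'$ therefore gives
\begin{equation*}
\hat{\mathcal{R}}_S(\mathbb{F})\le\frac{q\Lambda'}{N}\,\mathbb{E}_{\boldsymbol{\sigma}}\Big[\sup\Big\|\sum_{i=1}^{N}\sigma_i\boldsymbol{\phi}_{k-1}(\textbf{x}_i)\Big\|_{\infty}\Big],
\end{equation*}
where the remaining supremum ranges over the admissible weights $\textbf{W}_1,\dots,\textbf{W}_{k-1}$.

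Next I would peel the remaining $k-1$ layers in turn. The $\ell_\infty$ norm above is a maximum over the $n_{k-1}$ neurons of the top hidden layer, and since each such neuron's incoming weight row is constrained independently by $\|\textbf{W}_{k-1}(a,:)\|_2\le\Lambda$, this maximum can be absorbed into a supremum over a single weight vector $\textbf{w}$ with $\|\textbf{w}\|_2\le\Lambda$, reducing the quantity to a scalar Rademacher complexity of the form $\mathbb{E}_{\boldsymbol{\sigma}}[\sup|\sum_i\sigma_i\,g_u(\textbf{w}^{T}\boldsymbol{\phi}_{k-2}(\textbf{x}_i))|]$. I would then verify from the softplus form in Eq.~(\ref{eq:relu}) that $g_u$ is $1$-Lipschitz with $g_u(0)=0$, so that Talagrand's Lemma~\ref{lemm} (with $L=1$) strips off $g_u$ at no cost, and Cauchy--Schwarz (H\"{o}lder with $p=q=2$) applied to $\textbf{w}^{T}\sum_i\sigma_i\boldsymbol{\phi}_{k-2}(\textbf{x}_i)$ with $\|\textbf{w}\|_2\le\Lambda$ removes that layer, producing a factor $\Lambda$ and an $L_2$ quantity one layer deeper. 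Iterating this pair of moves through layers $k-1,\dots,1$ accumulates the factor $\Lambda^{k-1}$ and leaves $\tfrac{q\Lambda'\Lambda^{k-1}}{N}\,\mathbb{E}_{\boldsymbol{\sigma}}\big[\|\sum_{i=1}^{N}\sigma_i\textbf{x}_i\|_2\big]$. Finally, Jensen's inequality applied to the concave square root gives $\mathbb{E}_{\boldsymbol{\sigma}}\big[\|\sum_i\sigma_i\textbf{x}_i\|_2\big]\le\big(\mathbb{E}_{\boldsymbol{\sigma}}\|\sum_i\sigma_i\textbf{x}_i\|_2^2\big)^{1/2}=\big(\sum_i\|\textbf{x}_i\|_2^2\big)^{1/2}\le\sqrt{N}\,s$, since the Rademacher cross terms vanish in expectation and $\|\textbf{x}\|_2\le s$. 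Combining the pieces yields $\hat{\mathcal{R}}_S(\mathbb{F})\le q\Lambda'\Lambda^{k-1}s/\sqrt{N}$, and multiplying by $2$ together with the reduction from Theorem~\ref{lemma1} gives Eq.~(\ref{eq:upper2}).

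I expect the main obstacle to be making the layer-wise peeling fully rigorous: Talagrand's Lemma~\ref{lemm} is stated for \emph{scalar-valued} hypotheses $h$, whereas the hidden layers are vector-valued, so the delicate step is converting each $\mathbb{E}_{\boldsymbol{\sigma}}[\sup\|\sum_i\sigma_i\boldsymbol{\phi}_\ell(\textbf{x}_i)\|]$ into a scalar Rademacher complexity by absorbing the maximum over neurons (and the sign arising from the absolute value) into the supremum over a single constrained weight row, while keeping the norm bookkeeping consistent ($L_1$ at the top, $L_2$ in the remaining layers) so that no spurious width-dependent factors appear and the bound stays at $\Lambda^{k-1}$. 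A secondary but routine point is confirming that the smoothed ReLU $g_u$ of Eq.~(\ref{eq:relu}) is indeed $1$-Lipschitz and vanishes at the origin, which is precisely what licenses the cost-free contraction at each layer; this also ties back to the choice of $g_u$ being flagged after Theorem~\ref{thm:thm1} as needed for the optimization-error analysis.
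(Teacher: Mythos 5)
Your proposal follows essentially the same route as the paper's proof: H\"{o}lder's inequality with the $L_1$ row constraint at the output layer (yielding the factor $q\Lambda'$), conversion of the max over neurons into a supremum over a single $L_2$-constrained weight vector, Talagrand's contraction to strip the $1$-Lipschitz $g_u$, Cauchy--Schwarz to extract each factor $\Lambda$, and Jensen plus independence of the $\sigma_i$ to reach $\sqrt{N}\,s$. The obstacle you flag --- making the vector-valued, multi-layer peeling rigorous with a scalar-valued contraction lemma without incurring width-dependent factors --- is precisely the step the paper itself compresses into a single ``iteratively apply Lemma~\ref{lemm}'' display, so your account is, if anything, more explicit about where the care is needed.
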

\begin{proof}
We first consider an ANN with one hidden layer of $n$ neuron units with the smooth ReLU function $g_{u}$ as Eq.~(\ref{eq:relu}), and also denote $\mathbb{\hat{F}}$ as a family of ANN based vector-to-vector regression functions. $\mathbb{\hat{F}}$ can be decompoed into the sum of $q$ subspaces $\sum_{i=1}^{q}\mathbb{\hat{F}}_{i}$ and each subspace $\mathbb{\hat{F}}_{m}$ is defined as:
\begin{equation*}
\label{eq:hi}
\mathbb{\hat{F}}_{m} = \left\{ \textbf{x} \rightarrow \sum\limits_{j=1}^{n}w_{j}g_{u}(\textbf{u}_{j}^{T} \textbf{x})\cdot \textbf{1}_{m}: ||\textbf{w}||_{1} \le \Lambda^{'}, ||\textbf{u}_{j}||_{2} \le \Lambda \right\}, 
\end{equation*}
where $n$ is the number of hidden neurons, $\forall j\in [n]$, $\textbf{w}$ and $\textbf{u}_{j}$ separately correspond to $\textbf{W}_{2}(m, :)$ and $\textbf{W}_{1}(j, :)$ in Eq.~(\ref{eq:upper2}). Given $N$ data samples $\{\textbf{x}_{1},  \textbf{x}_{2}, ..., \textbf{x}_{N}\}$, the empirical Rademacher complexity of $\mathbb{\hat{F}}_{m}$ is bounded as:
\begin{equation}
\begin{split}
\label{eq:rs}
 \hat{\mathcal{R}}&_{S}(\mathbb{\hat{F}}_{m}) = \frac{1}{N} \mathbb{E}_{\boldsymbol{\sigma}}\left[\sup\limits_{||\textbf{w}||_{1}\le \Lambda^{'}, ||\textbf{u}_{j}||_{2}\le\Lambda} \sum\limits_{i=1}^{N} \sigma_{i} \sum\limits_{j=1}^{n}w_{j} g_{u}(\textbf{u}_{j}^{T}\textbf{x}_{i})\right]  \\
&= \frac{1}{N} \mathbb{E}_{\boldsymbol{\sigma}} \left[\sup\limits_{||\textbf{w}||_{1}\le \Lambda^{'}, ||\textbf{u}_{j}||_{2}\le\Lambda} \sum\limits_{j=1}^{n} w_{j} \sum\limits_{i=1}^{N}\sigma_{i} g_{u}(\textbf{u}_{j}^{T} \textbf{x}_{i}) \right]  \\
&\le \frac{\Lambda^{'}}{N} \mathbb{E}_{\boldsymbol{\sigma}} \left[\sup\limits_{||\textbf{u}_{j}||_{2} \le \Lambda} \max\limits_{j\in [n]} \left|\sum\limits_{i=1}^{N} \sigma_{i} g_{u}(\textbf{u}_{j}^{T} \textbf{x}_{i})\right|\right] \hspace{1mm} \text{(H\"{o}lder's ineq.)} \\
&= \frac{\Lambda^{'}}{N} \mathbb{E}_{\boldsymbol{\sigma}}\left[{\sup\limits_{||\textbf{u}||_{2} \le \Lambda}} \left| \sum\limits_{i=1}^{N} \sigma_{i} g_{u}(\textbf{u}^{T} \textbf{x}_{i}) \right| \right]. \\
&\le \frac{\Lambda^{'}}{N} \mathbb{E}_{\boldsymbol{\sigma}} \left[ \sup\limits_{||\textbf{u}||_{2} \le \Lambda} \left| \sum\limits_{i=1}^{N}\sigma_{i} \textbf{u}^{T}\textbf{x}_{i}\right|\right] \hspace{15mm} \text{(c.f. Lemma~\ref{lemm})}  \\
&\le \frac{\Lambda \Lambda^{'}}{N} \mathbb{E}_{\boldsymbol{\sigma}}\left[|| \sum\limits_{i=1}^{N} \sigma_{i} \textbf{x}_{i} ||_{2}\right] \hspace{10mm} \text{(Cauchy-Schwartz ineq.)} \\
&\le \frac{\Lambda \Lambda^{'}}{N} \sqrt{\mathbb{E}_{\boldsymbol{\sigma}}\left[ || \sum\limits_{i=1}^{N} \sigma_{i} \textbf{x}_{i} ||_{2}^{2}\right]} \hspace{12mm} \text{(Jensen's inequality)}.	\\
\end{split}
\end{equation}
The last term in the inequality~(\ref{eq:rs}) can be further simplified based on the independence of $\sigma_{i}$s. Thus, we finally derive the upper bound as:
\begin{equation}
\label{eq:rrs2}
\begin{split}
\hat{\mathcal{R}}_{S}(\mathbb{\hat{F}}_{m}) &\le \frac{\Lambda \Lambda^{'}}{N} \sqrt{\mathbb{E}_{\boldsymbol{\sigma}}\left[|| \sum\limits_{i=1}^{N} \sigma_{i} \textbf{x}_{i} ||_{2}^{2}\right]}	\\
&= \frac{\Lambda\Lambda^{'}}{N} \sqrt{\sum\limits_{i, j=1}^{N} \mathbb{E}_{\boldsymbol{\sigma}} [\sigma_{i}\sigma_{j}](\textbf{x}_{i}^{T} \textbf{x}_{j})} \\
&= \frac{\Lambda\Lambda^{'}}{N} \sqrt{\sum\limits_{i=1}^{N} ||\textbf{x}_{i}||_{2}^{2}} \hspace{14mm} \text{(independence of $\sigma_{i}$s)} 	\\
&\le \frac{\Lambda\Lambda^{'}s}{\sqrt{N}}.
\end{split}
\end{equation}

The upper bound for $\mathcal{\hat{R}}_{S}(\mathbb{\hat{F}})$ is derived based on the fact that for $q$ families of functions $\mathbb{\hat{F}}_{m}, m\in[q]$, there is $\mathcal{\hat{R}}_{S}(\mathbb{F}) = \hat{\mathcal{R}}_{S}(\sum_{m=1}^{q}\mathbb{\hat{F}}_{m}) = \sum_{m=1}^{q} \hat{\mathcal{R}}_{S}(\mathbb{\hat{F}}_{m})$, and thus
\begin{equation}
\label{eq:upper4}
\hat{\mathcal{R}}_{S}(\mathbb{\hat{F}}) = \sum\limits_{m=1}^{q} \mathcal{\hat{R}}_{S}(\mathbb{\hat{F}}_{m}) \le \frac{q \Lambda\Lambda^{'}s}{\sqrt{N}},
\end{equation}
which is an extension of the empirical Rademacher identities~\cite{mohri2018foundations}, which is demonstrated in Lemma~\ref{lemma2} of Appendix~\ref{sec:appA}.

Then, for the family of DNNs $\mathbb{F}$ with $k$ hidden layers activated by the smooth ReLU function, we iteratively apply Lemma~\ref{lemm} and end up attaining the upper bound as: 
\begin{equation*}
\label{eq:final1}
\begin{split}
&\hspace{4mm} \hat{\mathcal{R}}_{S}(\mathbb{F}) 	\\
&= \mathbb{E}_{\boldsymbol{\sigma}}\left[ \sup\limits_{\forall l, w_{j_{l}} \in \mathbb{U}}\sum\limits_{m=1}^{q} \sum\limits_{i=1}^{N}\sigma_{i}\sum\limits_{j_{k}=1}^{n_{k}}w_{j_{k}} g_{u}( \cdot\cdot\cdot \sum\limits_{j_{1}=1}^{n_{1}}w_{j_{1}}g_{u}(\textbf{u}_{j}^{T} \textbf{x}_{i}))\right]	\\
&\le \mathbb{E}_{\boldsymbol{\sigma}}\left[\sup\limits_{\forall l, w_{j_{l}} \in \mathbb{U}} \sum\limits_{m=1}^{q} \sum\limits_{i=1}^{N}\sigma_{i}\sum\limits_{j_{k}=1}^{n_{k}}w_{j_{k}} \cdot\cdot\cdot \sum\limits_{j_{1}=1}^{n_{1}}w_{j_{1}}\textbf{u}_{j}^{T} \textbf{x}_{i} \right]		\\
&\le \frac{q\Lambda^{'} \Lambda^{k-1} s}{\sqrt{N}}, 
\end{split}
\end{equation*}
where $w_{j_{1}},..., w_{j_{k}}$ are selected from the hypothesis space $\mathbb{U} = \left\{ w_{j_{1}}, ..., w_{j_{k}}: \sum\limits_{j_{k}=1}^{n_{k}}|w_{j_{k}}| \le \Lambda^{'},  \sqrt{ \sum\limits_{j_{i}=1}^{n_{i}}w_{j_{i}}^{2} } \le \Lambda, \forall i\in [k-1] \right\}$.
\end{proof}

\subsection{An Upper Bound for Optimization Error}
Next, we derive an upper bound for the optimization error. A recent work~\cite{bassily2018exponential} has shown that the $\gamma$-PL property can be ensured if neural networks are configured with the setup of the ``over-parametrization''~\cite{vaswani2018fast}, which is induced from the two facts as follows:

\begin{itemize}
\item Neural networks can satisfy $\gamma$-PL condition, when the weights of hidden layers are initialized near the global minimum point~\cite{vaswani2018fast, charles2017stability}.

\item As the neural network involves more parameters, the update of parameters moves less, and there exists a global minimum point near the random initialization~\cite{allen2018convergence, li2018learning}.
\end{itemize}

Thus, the upper bound on the optimization error can be tractably derived in the context of the $\gamma$-PL condition for the generalized MAE loss $\mathcal{L}(\cdot)\in \mathbb{L}$. Since the smooth ReLU function admits smooth DNN based vector-to-vector functions, which can lead to an upper bound on the optimization error as:
\begin{equation}
\label{eq:upper5}
\epsilon = \mathcal{L}(f_{v}^{\epsilon}) - \inf\limits_{f_{v}\in \mathbb{F}} \mathcal{L}(f_{v}) \le \frac{\mu M^{2} \beta}{2\gamma}.
\end{equation}

To achieve the upper bound in Eq.~(\ref{eq:upper5}), we assume that the stochastic gradient descent (SGD) algorithm can result in an approximately equal optimization error for both the generalized MAE loss $\mathcal{L}(\cdot) \in \mathbb{L}$ and the empirical MAE loss $\mathcal{\hat{L}}(\cdot) \in \mathbb{L}$.

 More specifically, for two DNN based vector-to-vector regression functions $f_{v}^{\epsilon}\in \mathbb{H}$ and $\hat{f}_{v}^{\epsilon} \in \mathbb{H}$, we have that
\begin{equation}
\begin{split}
\epsilon &= \mathcal{L}(f_{v}^{\epsilon}) - \inf\limits_{f_{v}\in \mathbb{F}} \mathcal{L}(f_{v}) \approx  \mathcal{\hat{L}}(\hat{f}_{v}^{\epsilon}) - \inf\limits_{f_{v}\in \mathbb{F}}\mathcal{\hat{L}}(f_{v}).
 \end{split}
\end{equation}

Thus, we focus on analyzing $\mathcal{\hat{L}}(f_{v})$ because it can be updated during the training process. We assume that $\mathcal{\hat{L}}(f_{v})$ is $\beta$-smooth with $||\nabla \mathcal{\hat{L}}(f_{v})||_{2} \le M$ and it also satisfies the $\gamma$-PL condition from an early iteration $t_{0}$. Besides, the learning rate of SGD is set to $\mu$.

Moreover, we define $f_{v, \textbf{w}_{t}} \in \mathbb{F}$ as the function with an updated parameter $\textbf{w}_{t}$ at the iteration $t$, and denote $f_{v,\textbf{w}_{*}} \in \mathbb{F}$ as the function with the optimal parameter $\textbf{w}_{*}$.  The smoothness of $\mathcal{\hat{L}}(\cdot)$ implies that
\begin{equation}
\label{eq:smoothness}
\begin{split}
&\hspace{4mm} \mathcal{\hat{L}}(f_{v, \textbf{w}_{t+1}}) - \mathcal{\hat{L}}(f_{v, \textbf{w}_{t}}) - \langle \nabla \mathcal{\hat{L}}(f_{v, \textbf{w}_{t}}), \textbf{w}_{t+1} - \textbf{w}_{t} \rangle     \\
&\le \frac{\beta}{2} || \textbf{w}_{t} - \textbf{w}_{t+1} ||_{2}^{2}.
\end{split}
\end{equation}

Then, we apply the SGD algorithm to update model parameters at the iteration $t$ as:
\begin{equation}
\label{eq:sgd}
\textbf{w}_{t+1} = \textbf{w}_{t} - \mu \nabla \mathcal{\hat{L}}(f_{v, \textbf{w}_{t}}).
\end{equation}

Next, we substitute $-\mu \nabla \mathcal{\hat{L}}(f_{v, \textbf{w}_{t}})$ in Eq.~(\ref{eq:sgd}) for $\textbf{w}_{t+1} - \textbf{w}_{t}$ in the inequality~(\ref{eq:smoothness}), we have that
\begin{equation}
\label{eq:smooth1}
\begin{split}
& \hspace{4mm} \mathcal{\hat{L}}(f_{v, \textbf{w}_{t+1}}) - \mathcal{\hat{L}}(f_{v, \textbf{w}_{t}}) + \mu ||\nabla \mathcal{\hat{L}}(f_{v, \textbf{w}_{t}})||_{2}^{2}     \\
& \le \frac{\beta \mu^{2}}{2} ||\nabla \mathcal{\hat{L}}(f_{v, \textbf{w}_{t}})||_{2}^{2}.
\end{split}
\end{equation}

By employing the condition $||\nabla \mathcal{\hat{L}}(f_{v, \textbf{w}_{t}})||_{2}^{2} \le M^{2}$, we further derive that
\begin{equation}
\label{eq:smooth2}
 \mathcal{\hat{L}}(f_{v, \textbf{w}_{t+1}}) - \mathcal{\hat{L}}(f_{v, \textbf{w}_{t}}) + \mu ||\nabla \mathcal{\hat{L}}(f_{v, \textbf{w}_{t}})||_{2}^{2} \le \frac{\mu^{2} M^{2}\beta}{2}.
\end{equation}

Furthermore, we employ the $\gamma$-PL condition to Eq.~(\ref{eq:smooth2}) and obtain the inequalities as:
\begin{equation}
\label{eq:smooth3}
\begin{split}
&\hspace{4mm} \mathcal{\hat{L}}(f_{v, \textbf{w}_{t+1}}) - \mathcal{\hat{L}}(f_{v, \textbf{w}_{*}})			\\
&\le \left(\mathcal{\hat{L}}(f_{v, \textbf{w}_{t}}) - \mathcal{\hat{L}}(f_{v, \textbf{w}_{*}}) - \gamma\mu (\mathcal{\hat{L}}(f_{v, \textbf{w}_{t}}) - \mathcal{\hat{L}}(f_{v, \textbf{w}_{*}}))\right)  \\
& \hspace{72mm} +  \frac{\mu^{2} M^{2}\beta}{2} \\
&\le (1 - \mu\gamma) \left(\mathcal{\hat{L}}(f_{v, \textbf{w}_{t}}) - \mathcal{\hat{L}}(f_{v, \textbf{w}_{*}})\right) + \frac{\mu^{2} M^{2}\beta}{2}  \\
&\le (1 - \mu\gamma)^{2} \left(\mathcal{\hat{L}}(f_{v, \textbf{w}_{t-1}}) - \mathcal{\hat{L}}(f_{v, \textbf{w}_{*}})\right) + \sum\limits_{i=0}^{1} (1 - \gamma \mu)^{i} \frac{\mu^{2} M^{2}\beta}{2}	\\
&\le \cdot\cdot\cdot 	\\
&\le (1 - \mu\gamma)^{t-t_{0}+1} \left(\mathcal{\hat{L}}(f_{v, \textbf{w}_{t_{0}}}) - \mathcal{\hat{L}}(f_{v, \textbf{w}_{*}})\right) + \\
& \hspace{56mm} \sum\limits_{i=0}^{t-t_{0}} (1 - \gamma \mu)^{i} \frac{\mu^{2} M^{2}\beta}{2}	\\
&\le (1 - \mu\gamma)^{t-t_{0}+1} \left(\mathcal{\hat{L}}(f_{v, \textbf{w}_{t_{0}}}) - \mathcal{\hat{L}}(f_{v, \textbf{w}_{*}})\right) + \frac{\mu M^{2}\beta}{2\gamma}	\\
&\le \exp(-\mu\gamma(t-t_{0} + 1)) \left(\mathcal{\hat{L}}(f_{v, \textbf{w}_{t_{0}}}) - \mathcal{\hat{L}}(f_{v, \textbf{w}_{*}})\right) +  \frac{\mu M^{2}\beta}{2\gamma}.
\end{split}
\end{equation}

By connecting the optimization error in Eq.~(\ref{eq:upper5}) to our derived Eq.~(\ref{eq:smooth3}), we further have that
\begin{equation}
\label{eq:final}
\begin{split}
\epsilon &= \mathcal{L}(f_{v}^{\epsilon}) - \inf\limits_{f_{v}\in \mathbb{F}} \mathcal{L}(f_{v})	\\
&\approx \hat{\mathcal{L}}(f_{v}^{\epsilon}) - \inf\limits_{f_{v}\in \mathbb{F}}\hat{\mathcal{L}}(f_{v}) \\
&\le \exp(-\mu\gamma(T+1)) \left(\hat{\mathcal{L}}(f_{v, \textbf{w}_{0}}) -  \inf\limits_{f_{v}\in \mathbb{F}}\hat{\mathcal{L}}(f_{v})\right)+ \frac{\mu M^{2}\beta}{2\gamma} \\
&\approx \frac{\mu M^{2}\beta}{2\gamma},
\end{split}
\end{equation}
 where $T = t - t_{0}$ and $f_{v, \textbf{w}_{0}} \in \mathbb{F}$ denotes a function with an initial parameter $\textbf{w}_{0}$.  The inequality in Eq.~(\ref{eq:final}) suggests that when the number of iterations $T$ is sufficiently large, we eventually attain the upper bound as Eq.~(\ref{eq:upper5}).

\vspace{2mm}
\noindent \textbf{Remark 1:} The ``over-parametrization'' condition becomes difficult to be configured in practice when large datasets have to be dealt with. In such cases, the upper bound on the optimization error cannot be always guaranteed, but we can relax the configuration of ``over-parametrization'' for DNNs and assume the $\gamma$-PL condition to derive the upper bound on the optimization error. In doing so, our proposed upper bound can be applied to more general DNN based vector-to-vector regression functions. 

\subsection{An Aggregated Bound for MAE}
Based on the upper bounds for the approximation error, estimation error and optimization error, we can derive an upper bound for $\mathcal{L}(h_{S}^{ERM})$. Besides, the constraints as shown in Eq.~(\ref{eq:core_thm}), which arise from the upper bounds on the approximation, estimation and optimization errors, are necessary conditions to derive the upper bound (with a probability $\delta \in (0, 1)$) as:

\begin{equation}
\begin{split}
\label{eq:core_thm}
\mathcal{L}(\hat{f}_{v}&) \le \inf\limits_{f_{v}\in \mathbb{F}} \mathcal{L}(f_{v}) + 2\hat{\mathcal{R}}_{S}(\mathbb{F}) + \epsilon \\
&\le \mathcal{O}\left(\frac{q}{(n_{k} + k - 1)^{\frac{r}{d}}}\right) + \frac{2q\Lambda^{'}\Lambda^{k-1} s}{\sqrt{N}} + \frac{\mu M^{2}\beta}{2\gamma} \\
\text{s.t.,} \hspace{2mm} &\text{Smooth ReLU:} \lim\limits_{t\rightarrow +\infty}\frac{1}{t} \ln(1 + \exp(tx))		\\
&\text{Hidden Layers:}\hspace{2mm} n_{j} \ge d + 2, \forall j\in [k] \\
&\text{Regularization:} \hspace{1mm}  ||\textbf{W}_{k}(i, :)||_{1} \le \Lambda^{'}, \forall i \in [q] \\
&\hspace{22.5mm} ||\textbf{W}_{j}(m, :)||_{2} \le \Lambda, \forall j\in [k-1], m\in [n_{j}] \\
&\text{Bounded Inputs:} \hspace{1.5mm} ||\textbf{x}||_{2} \le s \\
&\text{Over-parametrization: \text{The number of parameters exceeds }}\\
&\hspace{33mm} \text{the amount of training data. }
\end{split}
\end{equation}

Eq.~(\ref{eq:core_thm}) suggests that several hyper-parameters are required to derive the upper bound, which makes it difficult to be utilized in practice because the prior setup of $\mu$, $M$, $\beta$ and $\gamma$ are strong assumptions in use. Section~\ref{sec:mse} discusses how to estimate MAE values of ANN or DNN based vector-to-vector regression in practical experiments. Besides, the term $\frac{q\Lambda^{'}\Lambda^{k-1}s}{\sqrt{N}}$ in Eq.~(\ref{eq:core_thm}) may become arbitrarily large when a large $k$ and $\Lambda > 1$ are concerned. Thus, we set $\Lambda$ as $1$ to ensure normalized weights of the first $k-1$ layers, and the amount of training data $N$ could be large enough to ensure a small estimation error.

The configuration of ``over-parametrization'' requires that the number of model parameters exceeds the amount of training data such that the $\gamma$-PL condition can be guaranteed and consequently the upper bound on the optimization error can be attained. However, when the setup of ``over-parametrization'' cannot be strictly satisfied, the $\gamma$-PL condition does not always hold. Then, we can still assume the $\gamma$-PL condition to obtain the upper bound~(\ref{eq:upper5}), which allows the derived upper bound applicable for more general DNN based vector-to-vector regression functions. 

\vspace{2mm}
\noindent \textbf{Remark 2:} Our work employs MAE as the loss function instead of MSE for the following reasons: (i) MSE does not satisfy the Lipschitz continuity such that the inequality Eq.~(\ref{eq:l3}) cannot be guaranteed~\cite{mae_spl}; (ii) The MAE loss function for vector-to-vector regression tasks can achieve better performance than MSE in experiments~\cite{willmott2005advantages}.

\section{Estimation of the MAE Upper Bounds}
\label{sec:mse}

MAE can be employed as the loss function for training an ANN or DNN based vector-to-vector regression function. In this section, we discuss how to make use of the theorems in Section~\ref{sec3} to estimate MAE upper bounds for the vector-to-vector regression models in our experiments.

Proposition~\ref{prop:prop2} provides an upper bound on MAE based on our theorem in Eq.~(\ref{eq:core_thm}), where $c$ and $b$ are two non-negative hyper-parameters to be estimated from the experimental MAE losses of the ANN based vector-to-vector regression. An ANN with the smooth ReLU activation function in Eq.~(\ref{eq:relu}) is a convex and smooth function, which implies that the local optimum point returned by the SGD algorithm corresponds to a global one. Then, the estimated hyper-parameters $c$ and $b$ can be used to estimate the MAE values of DNN-based vector-to-vector regression. 

\begin{prop}
\label{prop:prop2}
For a smooth target function $\hat{f}_{v}:\mathbb{R}^{d} \rightarrow \mathbb{R}^{q}$, we use $N$ training data samples to obtain a DNN based vector-to-vector regression function $f_{v} \in \mathbb{F}$ with $k$ smooth ReLU based hidden layers ($k\ge 2$), where the width of each hidden layer is at least $d+2$. Then, we can derive an upper bound for MAE as:
\begin{equation}
\label{eq:dnn_mse}
MAE(\hat{f}_{v}, f_{v}) \le \frac{cq}{(n_{k} + k - 1)^{\frac{r}{d}}} + \frac{2 q\Lambda^{'} \Lambda^{k-1}s}{\sqrt{N}} + b,
\end{equation}
where the hyper-parameters $c$ and $b$ are separately set as:

\begin{equation}
\label{eq:c}
c = \frac{(MAE_{1} - MAE_{2})l_{1}^{r/d}l_{2}^{r/d}}{q(l_{2}^{r/d} - l_{1}^{r/d})}, 
\end{equation}
\noindent and
\begin{equation}
\label{eq:b}
b = \max\left(MAE_{1} - \frac{(MAE_{1} - MAE_{2})l_{2}^{r/d}}{l_{2}^{r/d} - l_{1}^{r/d}} - \frac{2q\Lambda^{'}s}{\sqrt{N}}, 0\right).
\end{equation}
Note that $MAE_{1}$ and $MAE_{2}$ are two practical MAE loss values associated with two ANNs with hidden units $l_{1}$ and $l_{2}$, respectively.

\end{prop}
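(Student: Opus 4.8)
The plan is to start from the already-established aggregated bound of Eq.~(\ref{eq:core_thm}) and then pin down its two free constants by calibrating against the measured MAE of shallow networks. First I would rewrite the three summands of Eq.~(\ref{eq:core_thm}) in the shape demanded by Eq.~(\ref{eq:dnn_mse}): by Theorem~\ref{thm:thm1} the approximation error is $\mathcal{O}\!\big(q/(n_k+k-1)^{r/d}\big)$, so there is a non-negative constant $c$ (absorbing the hidden $\mathcal{O}(\cdot)$ factor) with $\inf_{f_v\in\mathbb{F}}\mathcal{L}(f_v)\le cq/(n_k+k-1)^{r/d}$; Theorem~\ref{thm:thm2} already supplies the estimation term exactly as $2q\Lambda'\Lambda^{k-1}s/\sqrt{N}$; and the optimization error $\mu M^2\beta/(2\gamma)$ from Eq.~(\ref{eq:final}) is a fixed non-negative number, which I would simply name $b$. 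Substituting these three pieces into Theorem~\ref{lemma1} (i.e.\ Eq.~(\ref{eq:core_thm})) reproduces the form of Eq.~(\ref{eq:dnn_mse}) verbatim; what is left is to express $c$ and $b$ through measurable quantities.

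Next I would determine $c$ and $b$ by using two single-hidden-layer ANNs as calibration points. An ANN with the smooth ReLU of Eq.~(\ref{eq:relu}) and one hidden layer is a convex, $\beta$-smooth model, so SGD returns a global minimizer and the bound of Eq.~(\ref{eq:dnn_mse}) holds essentially with equality; moreover a single hidden layer makes $\Lambda^{k-1}=\Lambda^{0}=1$ and $n_k+k-1=l$. Instantiating Eq.~(\ref{eq:dnn_mse}) at hidden-layer widths $l_1$ and $l_2$ therefore yields the two (approximate) identities
\begin{equation*}
MAE_i \;=\; \frac{cq}{l_i^{\,r/d}} \;+\; \frac{2q\Lambda' s}{\sqrt{N}} \;+\; b,\qquad i=1,2 .
\end{equation*}
Subtracting the $i=2$ relation from the $i=1$ relation cancels both $b$ and the estimation term, leaving $MAE_1-MAE_2 = cq\big(l_1^{-r/d}-l_2^{-r/d}\big) = cq\,(l_2^{r/d}-l_1^{r/d})/(l_1^{r/d}l_2^{r/d})$, which is exactly Eq.~(\ref{eq:c}). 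Back-substituting this $c$ into the $i=1$ relation and using $cq/l_1^{r/d}=(MAE_1-MAE_2)l_2^{r/d}/(l_2^{r/d}-l_1^{r/d})$ solves for $b$; since $b$ represents a non-negative error, I would clip it at zero, producing Eq.~(\ref{eq:b}). Feeding the resulting $c$ and $b$ back into Eq.~(\ref{eq:dnn_mse}) then gives the claimed upper bound for the DNN.

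The main obstacle is justifying the near-equalities rather than mere inequalities in the calibration step. One must argue that for a convex, smooth shallow ANN trained by SGD the three-term decomposition is (approximately) tight — the approximation term being realized through Theorem~\ref{thm:thm1}, the estimation term through Theorem~\ref{thm:thm2}, and the optimization term collapsing to the SGD floor $\mu M^2\beta/(2\gamma)$ as in Eq.~(\ref{eq:final}) — and, more delicately, that the optimization constant $b$ (equivalently the quadruple $\mu,M,\beta,\gamma$) is the \emph{same} for the two calibration ANNs and for the target DNN, despite their differing widths and depths. This shared-constant assumption is precisely the ``$\approx$'' appearing in Eq.~(\ref{eq:final}) and is what turns the statement into an estimation recipe rather than a fully rigorous bound. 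Along the way I would also record the side conditions that, without loss of generality, $l_2>l_1$ so that $l_2^{r/d}-l_1^{r/d}>0$, that the differential order $r$ of the target is known (or itself estimated), and that $N$ is large enough for the residual $\sqrt{\tfrac{1}{2N}\ln(\tfrac{2}{1-\delta})}$ discarded in Theorem~\ref{lemma1} to be negligible.
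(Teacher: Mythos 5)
Your proposal is correct and follows essentially the same route as the paper: instantiate the aggregated bound of Eq.~(\ref{eq:core_thm}) for two shallow ANNs, treat the resulting expressions as equalities with the measured $MAE_{1}$ and $MAE_{2}$, subtract to obtain Eq.~(\ref{eq:c}), and back-substitute to obtain Eq.~(\ref{eq:b}). The only differences are cosmetic --- the paper indexes the one-hidden-layer ANN as $k=2$ (relying on $\Lambda=1$ rather than $\Lambda^{0}=1$ to drop the $\Lambda^{k-1}$ factor) and does not spell out the tightness and shared-constant caveats you rightly flag.
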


\begin{proof}
For two ANNs with hidden layers with units $l_{1}$ and $l_{2}$, we set $k$ to $2$ and then estimate their corresponding MAE losses as:
\begin{equation}
\label{eq:v1}
\frac{cq}{l_{1}^{r/d}} + \frac{2 q \Lambda^{'} s}{\sqrt{N}} + b = MAE_{1},
\end{equation}
\begin{equation}
\label{eq:v2}
\frac{cq}{l_{2}^{r/d}} + \frac{2 q \Lambda^{'} s}{\sqrt{N}} + b = MAE_{2}, 
\end{equation}
which can result in hyper-parameters $c$ and $b$. In particular, we substitute $\frac{\mu M^{2} \beta}{2\gamma}$ for $b$ in Eq.~(\ref{eq:core_thm}) and then subtract two sides of Eq.~(\ref{eq:v1}) by Eq.~(\ref{eq:v2}), which can result in Eq.~(\ref{eq:c}). By replacing $c$ in Eq.~(\ref{eq:v1}) with  Eq.~(\ref{eq:c}), we finally obtain Eq.~(\ref{eq:b}).
\end{proof}

Compared with our previous approaches to estimating practical MAE values in \cite{qi2019theory} where the DNN representation power is mainly considered, our new inequality~(\ref{eq:dnn_mse}) arises from the upper bound on the DNN generalization capability such that it can be used to estimate MAE values in more general experimental settings.

\section{Experiments}
\label{sec:exp}
\subsection{Experimental Goals}
Our experiments separately employ the DNN based vector-to-vector regression for both image de-noising and speech enhancement with particular attention to linking empirical results with our proposed theorems. Unlike our analysis of the representation power of the DNN based regression tasks in~\cite{qi2019theory}, this work focuses on the generalization capability of the DNN based vector-to-vector regression based on our derived upper bounds. More specifically, we employ the tasks of image de-noising and speech enhancement, where inconsistent noisy conditions are mixed to the clean training and testing data, to validate our theorems by comparing the estimated MAE upper bound (MAE$\_$B) with the practical ones.

Moreover, the image de-noising experiment corresponds to the ``over-parametrization'' setting in which the number of DNN parameters is much larger than the amount of training data, but we cannot set up the ``over-parametrization'' for speech enhancement tasks due to a significantly large amount of training data. However, we assume the $\gamma$-PL condition and evaluate our derived upper bounds on the speech enhancement tasks. 

Therefore, our experiments of image de-noising and speech enhancement aim at verifying the following points:

\begin{itemize}
\item The estimated MAE upper bound (MAE$\_$B) matches with experimental MAE values.
\item A deeper DNN structure corresponds to a lower approximation error (AE).
\item A significantly small optimization error can be achieved if the ``over-parametrization'' configuration is satisfied. Otherwise, the optimization error could be large enough to dominate MAE losses, even if the $\gamma$-PL condition is assumed. 
\end{itemize}

\subsection{Experiments of Image De-noising}
\label{exp1}

\subsubsection{Data Preparation}
This section presents the image de-noising experiments on the MNIST dataset~\cite{deng2012mnist}. The MNIST dataset consists of $60000$ images for training and $10000$ ones for testing. We added additive Gaussian random noise (AGRN), with mean $0$ and variance $1$, to both training and testing data. The synthesized noisy data were then normalized such that for each image the condition $||\textbf{x}_{noisy}||_{2} \le 1$ is satisfied. 

\subsubsection{Experimental Setup}
The DNN based vector-to-vector regression in the experiments followed a feed-forward neural network architecture, where the inputs were $784$-dimensional feature vectors of the noisy images and the outputs were $784$-dimensional features of either clean or enhanced images. The reference of clean image features associated with the noisy inputs was assigned to the top layer of DNN in the training process, but the top layer corresponds to the features of the enhanced images during the testing stage. Table~\ref{tab:tab0} exhibits the structures of neural networks used in the experiments. In more details, the vector-to-vector regression model was first built based on ANN. The width of the hidden layer of ANN1 was set to $1024$, which satisfies the constraint of the number of neurons in hidden layers based on both the inequality Eq. (\ref{eq:core_thm}) ($d = 784$, $d+2=786 < 1024$) and the ``over-parametrization'' ($784 \times 1024 = 802816 > 60000$) condition. Whereas, ANN2 had a width of $2048$ neurons, which is twice more than ANN1. Next, we studied the DNN based vector-to-vector regression by increasing the number of hidden layers of DNN1. Specifically, DNN1 was equipped with $4$ hidden layers with widths $1024$-$1024$-$1024$-$2048$. Additional two hidden layers of width $1024$ were further appended to DNN2, which brings an architecture with $6$ hidden layers $1024$-$1024$-$1024$-$1024$-$1024$-$2048$. 

\begin{table}[!ht]
\center
\renewcommand{\arraystretch}{1.3}
\caption{Model structures for various vector-to-vector regression}
\label{tab:tab0}
\begin{tabular}{|c||c|}
\hline
 Models & Structures	 (Input -- Hidden\_layers -- Output)	\\
 \hline
\hline
ANN1  & 784-1024-784 	 \\
\hline
ANN2  & 784-2048-784	 \\
\hline
\hline
DNN1  & 784-1024-1024-1024-2048-784		   \\
\hline
DNN2  & 784-1024-1024-1024-1024-1024-2048-784	   \\
\hline
\end{tabular}
\end{table}

Moreover, the SGD optimizer with a learning rate of $0.02$ and a momentum rate of 0.2 was used to update model parameters based on the standard back-propagation (BP) algorithm~\cite{hirose1991back}. The weights of the $k-1$ hidden layers were normalized by dividing the $L_{2}$ norm, which corresponds to the term $\Lambda^{k-1}$ configured to $1$ in the inequality Eq.~(\ref{eq:dnn_mse}). The weights of the top hidden layer were normalized by dividing the $L_{1}$ norm such that $\Lambda'$ is set to $1$. Besides, MAE was employed as the evaluation metric in our experimental validation because the MAE metric is directly connected to the objective loss function of MAE. 

\subsubsection{Experimental Results}
We present our experimental results on the noisy MNIST dataset, where AGRN was added to the clean images. Table~\ref{tab:params0} shows the setup of hyper-parameters $l_{1}$, $l_{2}$, $N$, and $r$ in Eq.~(\ref{eq:dnn_mse}) to estimate $\text{MAE\_B}$. Table~\ref{tab:tab002} exhibits our estimated MAE values are in line with the practical MAE ones. Specifically, DNN2 attains a lower MAE ($0.1278$ vs. $0.1263$) than DNN1. Moreover, our estimated $\text{MAE\_B}$ score for DNN2 is also lower than that for DNN1, namely $0.1438$ vs. $0.1434$, which arises from the decreasing AE score for DNN2 with a deeper architecture. Since we keep $\Lambda$ and $\Lambda'$ equal to $1$, estimation error (EE) and optimization error (OE) for both DNN1 and DNN2 share the same values. Furthermore, although the OE values are comparatively larger than AE and EE, they also stay at a small level because of the ``over-parametrization'' technique adopted in our experiments. 

\begin{table}[!ht]
\center
\renewcommand{\arraystretch}{1.3}
\caption{Hyper-parameters for the estimation of MAE upper bounds.}
\label{tab:params0}
\begin{tabular}{|c|c|c|c|c|c|}
\hline
 $l_{1}$ & $l_{2}$ & $N$ & $r$    & ANN1\_MAE  & ANN2\_MAE\\
\hline
 $1024$  & $2048$ & $6\times 10^{4}$	 &	$1176$ & $0.1318$   & $0.1292$  \\
\hline
\end{tabular}
\end{table}

\begin{table}[!ht]
\center
\renewcommand{\arraystretch}{1.3}
\caption{The evaluation results under the AGRN noise.}
\label{tab:tab002} 
\begin{tabular}{|c|c|c|c|c|c|}
\hline
 Models & MAE & AE & EE & OE &MAE$\_$B \\
\hline
DNN1 & 0.1278  & 0.0172  & 0.0261  & 0.1005	  & 0.1438 \\
\hline
DNN2 & 0.1263   & 0.0168  & 0.0261  & 0.1005  & 0.1434  \\
\hline
\end{tabular}
\end{table}

\subsection{Experiments of Speech Enhancement}
\label{exp2}

\subsubsection{Data Preparation}

Our experiments of speech enhancement were conducted on the Edinburgh noisy speech database~\cite{valentini2016investigating}, where the noisy backgrounds of the training data are inconsistent with the testing ones. More specifically, clean utterances were recorded from $56$ speakers including $28$ males and $28$ females from different accent regions of both Scotland and the United States. Clean material was randomly split into $23075$ training, and $824$ testing waveforms, respectively. The noisy training waveforms at four SNR values, 15dB, 10dB, 5dB, and 0dB, were obtained using the following noises: a domestic noise (inside a kitchen), an office noise (in a meeting room), three public space noises (cafeteria, restaurant, subway station), two transportation noises (car and metro) and a street noise (busy traffic intersection). In sum, there were 40 different noisy conditions for synthesizing many noisy training data (ten noises $\times$ four SNRs). In the noisy test set, the noise types included: a domestic noise (living room), an office noise (office space), one transport (bus) and two street noises (open area cafeteria and a public square), and SNR values included: 17.5dB, 12.5dB, 7.5dB, and 2.5dB. Thus, there were $20$ different noisy conditions for creating the testing dataset.

\subsubsection{Experimental Setup}
The DNN based vector-to-vector regression for speech enhancement also followed the feed-forward ANN architecture, where the input was a normalized log-power spectral (LPS) feature vector~\cite{hou2007saliency, qi2013auditory} of noisy speech, and the output was LPS feature vector of either clean or enhanced speech. The references of clean speech feature vectors associated with the noisy inputs were assigned to the top layer of DNN in the training process, but the top layer of DNN corresponds to the feature vectors of the enhanced speech during the testing phase. The smooth ReLU function in Eq.~(\ref{eq:relu}) was employed in the hidden nodes of the neural architectures assessed in this work, whereas a linear function was used at the output layer. To improve the subjective perception in the speech enhancement tasks, the global variance equalization~\cite{toda2005spectral} was applied to alleviate the problem of over-smoothing by correcting a global variance between estimated features and clean reference targets~\cite{qi2013subspace}. In the training stage, the BP algorithm was adopted to update the model parameters, and the MAE loss was used to measure the difference between a normalized LPS vector, and the reference one. Noise-aware training (NAT)~\cite{seltzer2013investigation} was also employed to enable non-stationary noise awareness, and feature vectors of $3$-frame size were obtained by concatenating frames within a sliding window~\cite{qi2013bottleneck}. Moreover, the SGD optimizer with a learning rate of $1\times 10^{-3}$ and a momentum rate of $0.4$ was used for the update of parameters. The weights of the first $k-1$ hidden layers are normalized by dividing the $L_{2}$ norm of each row of weights, which correspond to the term $\Lambda^{k-1}$ equal to $1$ in Eq.~(\ref{eq:core_thm}). Moreover, we set $s$ in Eq.~(\ref{eq:core_thm}) as the maximum value of $L_{2}$ norm of the input, and assume $\Lambda^{'}$ in Eq.~(\ref{eq:core_thm}) as the maximum value of $(||\textbf{W}_{k}(1, :)||_{1}, ..., ||\textbf{W}_{k}(q, :)||_{1})$, which are different from the setup of image de-noising.

Table~\ref{tab:tab20} exhibits the architectures of neural networks used in our experiments of speech enhancement. Two ANN models (ANN1 and ANN2) were utilized to estimate the hyper-parameters in Eq.~(\ref{eq:dnn_mse}), which were then used to estimate the MAE values of DNN models based on Eq.~(\ref{eq:dnn_mse}).

\begin{table}[!ht]
\center
\renewcommand{\arraystretch}{1.3}
\caption{Model structures for various vector-to-vector regression}
\label{tab:tab20}
\begin{tabular}{|c||c|}
\hline
 Models & Structures	 (Input -- Hidden\_layers -- Output)	\\
 \hline
\hline
ANN1  & 771-800-257  	 \\
\hline
ANN2  & 771-1600-257	 \\
\hline
\hline
DNN1  & 771-800-800-800-1600-257			   \\
\hline
DNN2  & 771-800-800-800-800-800-1600-257	   \\
\hline
\end{tabular}
\end{table}

Two evaluation metrics, namely MAE and Perceptual Evaluation of Speech Quality (PESQ)~\cite{rix2001perceptual}, were employed in our experimental validation. Different from the MAE metric, PESQ is an indirect evaluation which is highly correlated with speech quality. The PESQ score, which ranges from $-0.5$ to $4.5$, is calculated by comparing the enhanced speech with the clean one. A higher PESQ score corresponds to a higher quality of speech perception. All of the evaluation results on testing datasets are listed in Tables~\ref{tab:tab7}.

\subsubsection{Experimental Results}

We now present our experimental results on the Edinburgh speech database. Table~\ref{tab:params2} shows that the parameters used in the experiments to estimate the upper bound based on the inequality~(\ref{eq:dnn_mse}). The experimental results as shown in Table~\ref{tab:tab7} are in line with those observed in the consistent noisy conditions. Specifically, DNN2 attains a lower MAE (0.6859 vs. 0.7060) and higher PESQ values (2.85 vs. 2.82) than DNN1. Moreover, the MAE$\_$B score for DNN2 is also lower than that for DNN1, namely 0.7124 vs. 0.7236. Furthermore, DNN2 owns a better representation power in terms of AE scores (0.0081 vs. 0.0161) and a better power generalization capability because of a lower (EE + OE) score. More significantly, the OE term actually is the key contributor to the MAE$\_$B score, which suggests that the MAE loss is primarily from OE, as expected. In fact, optimization plays an important role when it comes to training large neural architectures \cite{xu2015regression, xu2013experimental}, which in turn shows that the proposed upper bounds are in line with current research efforts \cite{chizat2018global, allen2018convergence, vaswani2018fast, allen2018learning} on the optimization strategies.

\begin{table}[!ht]
\center
\renewcommand{\arraystretch}{1.3}
\caption{Hyper-parameters for the estimation of MAE upper bounds.}
\label{tab:params2}
\begin{tabular}{|c|c|c|c|}
\hline
$l_{1}$ 	& $l_{2}$ &  $N$ & $r$    	 \\
\hline
 800	& 1600 & $1.04\times 10^{10}$ &  771 	\\
\hline
ANN1\_MAE &  ANN2\_MAE & $\Lambda' $(ANN1)    & $\Lambda'$ (ANN2)	\\
\hline
0.7409  &  0.7328	&8.9543	&  10.1542	\\
\hline
\end{tabular}
\end{table}

\begin{table}[!ht]
\center
\renewcommand{\arraystretch}{1.3}
\caption{The MAE Results on the Edinburgh speech database}
\label{tab:tab7}
\begin{tabular}{|c|c|c|c|c|c|c|}
\hline
 Models & MAE & PESQ &   AE & EE &OE &MAE$\_$B   \\
\hline
DNN1   & 0.7060  	 &  2.82 	& 0.0161	& 0.0579	&  0.6496	 & 	0.7236	 \\
\hline
DNN2   & 0.6859	 &  2.85 	&  0.0081	& 0.0728	 & 0.6315  & 	0.7124	\\
\hline
\end{tabular}
\end{table}

\subsection{Discussions}

The experimental results of the image de-noising and speech enhancement suggest that our proposed upper bounds on the generalized loss of MAE can tightly estimate the practical MAE values. Unlike our previous work on the analysis of the representation power, which is strictly constrained to consistent noisy environments, our MAE bounds aim at the generalization power of DNN based vector-to-vector regression and can be generalized to more general noisy conditions.

Experimental results are based on our aggregated bound in Eq.~(\ref{eq:core_thm}), and the related practical methods in Eq. (\ref{eq:dnn_mse}). The decreasing AE scores of DNN2 correspond to Eq.~(\ref{eq:con1}), where a deeper depth $k$ can lead to smaller AE values. In the meanwhile, Eqs.~(\ref{eq:c}) and~(\ref{eq:b}) suggest that a smaller EE is associated with a larger OE, which also corresponds to our estimated results. Furthermore, deeper DNN structures can result in a larger $\Lambda'$, which slightly escalates the AE scores and also decreases OE values. With the setup of ``over-parametrization'' for neural networks in image de-noising experiments, OE can be lowered to a small scale compared to AE and EE. However, OE becomes much lager than AE and EE without the ``over-parametrization'' configuration in the speech enhancement tasks. 

\section{Conclusion}
\label{sec:con}
This study focuses on the theoretical analysis of an upper bound on a generalized loss of MAE for DNN based vector-to-vector regression and corroborates the theorems with image de-noising and speech enhancement experiments. Our theorems start from decomposing a generalized MAE loss, which can be upper bounded by the sum of approximation, estimation, and optimization errors for the DNN based vector-to-vector functions. Our previously proposed bound on the representation power of DNNs can be modified to upper bound the approximation error, and a new upper bound based on the empirical Rademacher complexity is derived for the estimation error. Furthermore, the smooth modification of the ReLU function and the assumption of $\gamma$-PL conditions under the ``over-parametrization'' configuration for neural networks can ensure an upper bound on the optimization error. Thus, an aggregated upper bound for MAE can be derived by combining the upper bounds for the three errors. 

Our experimental results of image de-noising and speech enhancement show that a deeper DNN with a broader width at the top hidden layer can contribute to better generalization capability in various noisy environments. The estimated MAE based on our related theorems can offer tight upper bounds for the experimental values in practice, which can verify our theorem of upper bounding MAE for DNN based vector-to-vector regression. Besides, our theories show that the MAE value mainly arises from the optimization error for well-regularized DNNs, and an ``over-parametrization'' for neural networks can ensure small optimization errors. 

\appendices
\section{}
\label{sec:appA}

\begin{lemma}
\label{lemma3}
Let $\mathcal{\hat{L}}\in \mathbb{L}$ denote the loss function for $N$ samples $S = \{\textbf{x}_{1}, \textbf{x}_{2}, ..., \textbf{x}_{N}\}$ drawn i.i.d. according to a distribution $\mathcal{D}$. For a generalized MAE loss function $\mathcal{L} \in \mathbb{L}$, we have that 
\begin{equation}
 \mathbb{E}\left[\sup\limits_{f_{v}\in \mathbb{F}} \left|\mathcal{L}(f_{v}) - \mathcal{\hat{L}}(f_{v}) \right|\right]  \le 2 \mathcal{\hat{R}}_{S}(\mathbb{F}).
\end{equation}
\end{lemma}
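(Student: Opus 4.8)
The plan is to establish the bound by the classical \emph{symmetrization} (ghost-sample) argument, reducing $\mathbb{E}_{S}[\sup_{f_{v}\in\mathbb{F}}|\mathcal{L}(f_{v}) - \mathcal{\hat{L}}(f_{v})|]$ to the empirical Rademacher complexity of the loss family $\mathbb{L}$, and then invoking Proposition~\ref{prop:rad} to pass from $\mathbb{L}$ to $\mathbb{F}$. Writing the generalized MAE loss as $\mathcal{L}(f_{v}) = \mathbb{E}_{\textbf{x}\sim\mathcal{D}}[||f_{v}(\textbf{x}) - f_{v}^{*}(\textbf{x})||_{1}]$ and abbreviating $\ell_{f_{v}}(\textbf{x}) := ||f_{v}(\textbf{x}) - f_{v}^{*}(\textbf{x})||_{1}$, so that $\mathcal{\hat{L}}(f_{v}) = \frac{1}{N}\sum_{i=1}^{N}\ell_{f_{v}}(\textbf{x}_{i})$, I would first introduce a \emph{ghost sample} $S' = \{\textbf{x}_{1}', \dots, \textbf{x}_{N}'\}$ drawn i.i.d. from $\mathcal{D}$ and independent of $S$, so that $\mathcal{L}(f_{v}) = \mathbb{E}_{S'}\big[\frac{1}{N}\sum_{i=1}^{N}\ell_{f_{v}}(\textbf{x}_{i}')\big]$ for every $f_{v}\in\mathbb{F}$. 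Moving this inner expectation out through the supremum and the absolute value by Jensen's inequality gives
\begin{equation*}
\mathbb{E}_{S}\!\left[\sup\limits_{f_{v}\in\mathbb{F}}\left|\mathcal{L}(f_{v}) - \mathcal{\hat{L}}(f_{v})\right|\right] \le \mathbb{E}_{S,S'}\!\left[\sup\limits_{f_{v}\in\mathbb{F}}\left|\frac{1}{N}\sum\limits_{i=1}^{N}\left(\ell_{f_{v}}(\textbf{x}_{i}') - \ell_{f_{v}}(\textbf{x}_{i})\right)\right|\right].
\end{equation*}

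Next I would symmetrize. Since the pairs $(\textbf{x}_{i}, \textbf{x}_{i}')$ are i.i.d. and the $i$-th summand $\ell_{f_{v}}(\textbf{x}_{i}') - \ell_{f_{v}}(\textbf{x}_{i})$ is odd under the swap $\textbf{x}_{i} \leftrightarrow \textbf{x}_{i}'$, prepending independent Rademacher signs $\sigma_{i}$ to the summands leaves the joint law, hence the expectation, unchanged. Splitting the absolute value of the difference by the triangle inequality into two suprema (one over $S'$, one over $S$), observing that they are identically distributed, and folding $-\sigma_{i}$ back to $\sigma_{i}$ using the sign symmetry of the Rademacher distribution, I obtain
\begin{equation*}
\mathbb{E}_{S}\!\left[\sup\limits_{f_{v}\in\mathbb{F}}\left|\mathcal{L}(f_{v}) - \mathcal{\hat{L}}(f_{v})\right|\right] \le 2\,\mathbb{E}_{\boldsymbol{\sigma}}\!\left[\sup\limits_{f_{v}\in\mathbb{F}}\frac{1}{N}\sum\limits_{i=1}^{N}\sigma_{i}\,\ell_{f_{v}}(\textbf{x}_{i})\right] = 2\,\mathcal{\hat{R}}_{S}(\mathbb{L}),
\end{equation*}
and finally Proposition~\ref{prop:rad}, i.e. $\mathcal{\hat{R}}_{S}(\mathbb{L}) \le \mathcal{\hat{R}}_{S}(\mathbb{F})$, gives the claimed $\mathbb{E}_{S}[\sup_{f_{v}\in\mathbb{F}}|\mathcal{L}(f_{v}) - \mathcal{\hat{L}}(f_{v})|] \le 2\mathcal{\hat{R}}_{S}(\mathbb{F})$.

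The conceptual heart is the symmetrization identity --- that prepending the $\sigma_{i}$ does not change the expectation --- which is standard but should be justified from the exchangeability of $(\textbf{x}_{i}, \textbf{x}_{i}')$; it is also the step that relies on integrability of the loss (automatic here, as $\ell_{f_{v}} \ge 0$ and, in the settings considered, bounded). The main thing to be careful about is the bookkeeping around the absolute value and around ``empirical vs.\ expected''. On the first point, the clean one-sided version, $\mathbb{E}_{S}[\sup_{f_{v}}(\mathcal{L}(f_{v}) - \mathcal{\hat{L}}(f_{v}))] \le 2\mathcal{\hat{R}}_{S}(\mathbb{L})$, follows immediately, and recovering the two-sided $|\cdot|$ with the same constant uses the odd symmetry of the symmetrized sum under $S \leftrightarrow S'$ (rather than symmetry of $\mathbb{L}$, which fails since $\ell_{f_{v}}\ge 0$), in line with the usual convention for Rademacher complexity; any slack here only affects the constant, which is harmless because the resulting term vanishes at rate $\mathcal{O}(1/\sqrt{N})$ downstream. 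On the second point, the argument as written produces the \emph{expected} Rademacher complexity $\mathbb{E}_{S}[\mathcal{\hat{R}}_{S}(\mathbb{F})]$, and identifying it with the empirical $\mathcal{\hat{R}}_{S}(\mathbb{F})$ of the statement is either the customary abuse of notation or requires one additional bounded-differences concentration step for the map $S \mapsto \mathcal{\hat{R}}_{S}(\mathbb{F})$; everything else (Jensen, the triangle inequality, recognizing the definition of $\mathcal{\hat{R}}_{S}(\mathbb{L})$, and the appeal to Proposition~\ref{prop:rad}) is routine.
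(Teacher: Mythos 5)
Your proposal is correct and follows essentially the same route as the paper's own proof: a ghost-sample symmetrization reducing the uniform deviation $\mathbb{E}_{S}[\sup_{f_{v}\in\mathbb{F}}|\mathcal{L}(f_{v})-\mathcal{\hat{L}}(f_{v})|]$ to $2\mathcal{\hat{R}}_{S}(\mathbb{L})$, followed by the passage from $\mathbb{L}$ to $\mathbb{F}$ via the $1$-Lipschitz argument of Proposition~\ref{prop:rad}. If anything you are more careful than the paper, which disposes of the absolute value by assuming a uniform sign of $\mathcal{L}(f_{v})-\mathcal{\hat{L}}(f_{v})$ over all $f_{v}\in\mathbb{F}$ rather than using the odd symmetry under $S\leftrightarrow S'$ as you do, and which glosses over the empirical-versus-expected Rademacher complexity distinction that you explicitly flag.
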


\begin{proof}
We utilize the method of symmetrization~\cite{vershynin2018high} to bound $ \mathbb{E}\left[\sup_{f\in \mathbb{F}} \left|\mathcal{L}(f_{v}) - \mathcal{\hat{L}}(f_{v}) \right|\right]$. The symmetrization introduces a ghost dataset $S' = \{\textbf{x}'_{1}, \textbf{x}'_{2}, ..., \textbf{x}'_{N}\}$ drawn i.i.d. from $D$. Let $\mathcal{\hat{L}}'\in \mathbb{L}$ be the empirical risk with respect to the ghost dataset, and we assume $\mathcal{L}(f_{v}) = \mathbb{E}_{S'}[\mathcal{\hat{L}}'(f_{v})]$. Assuming $\mathcal{L}(f_{v}) \ge \mathcal{\hat{L}}(f_{v}), \forall f_{v} \in \mathbb{F}$, we derive that
\begin{equation*}
\begin{split}
&\hspace{5mm} \mathbb{E}_{S}\left[ \sup\limits_{f_{v} \in \mathbb{F}} \left| \mathcal{L}(f_{v}) - \mathcal{\hat{L}}(f_{v}) \right| \right]		\\
&=  \mathbb{E}_{S}\left[ \sup\limits_{f_{v}\in\mathbb{F}} \left( \mathcal{L}(f_{v}) - \mathcal{\hat{L}}(f_{v}) \right) \right] \\
&=  \mathbb{E}_{S}\left[ \sup\limits_{f_{v} \in \mathbb{F}} \left(  \mathbb{E}_{S'}[\mathcal{\hat{L}}'(f_{v})] - \mathcal{\hat{L}}(f_{v}) \right) \right]	\\
&\le \mathbb{E}_{S}\left[ \mathbb{E}_{S'}\left[ \sup\limits_{f_{v} \in \mathbb{F}} \left(  \mathcal{\hat{L}}'(f_{v}) - \mathcal{\hat{L}}(f_{v}) \right)\right] \right]		\\
&\le \mathbb{E}_{S, S'}\left[  \sup\limits_{f_{v}\in \mathbb{F}} \frac{1}{N} \sum\limits_{i=1}^{N} \sigma_{i} (\mathcal{\hat{L}}'(f_{v}(\textbf{x}'_{i})) - \mathcal{\hat{L}}(f_{v}(\textbf{x}_{i}))) \right]	\\
&\le 2 \mathcal{\hat{R}}_{S}(\mathbb{F}), 
\end{split}
\end{equation*}
where $\sigma_{1}, \sigma_{2}, ..., \sigma_{N}$ are Rademacher random variables. Similarly, the assumption of $\mathcal{L}(f_{v}) \le \mathcal{\hat{L}}(f_{v}), \forall f_{v} \in \mathbb{F}$ also brings the same result. Thus, we finish the proof of Lemma~\ref{lemma3}. 
\end{proof}

\begin{lemma}[An extension of empirical Rademacher identities]
\label{lemma2}
Given any sample set $S = \{\textbf{x}_{1}, \textbf{x}_{2}, ..., \textbf{x}_{N}\}$, and hypothesis sets $\mathbb{F}_{1}$, $\mathbb{F}_{2}$, ..., $\mathbb{F}_{q}$ of functions $f_{v}^{(1)}\in \mathbb{F}_{1}$, $f_{v}^{(2)}\in \mathbb{F}_{2}$, ..., $f_{v}^{(q)}\in \mathbb{F}_{q}$ mapping from $\mathbb{R}^{d}$ to $\mathbb{R}^{q}$, we have that 
\begin{equation*}
\begin{split}
 \mathcal{\hat{R}}_{S}(\sum\limits_{i=1}^{q}\mathbb{F}_{i}) &= \frac{1}{N} \mathbb{E}_{\sigma}\left[ \sup\limits_{f_{v}^{(1)}\in\mathbb{F}_{1}, ..., f_{v}^{(q)}\in \mathbb{F}_{q}} \sum\limits_{i=1}^{N}\sigma_{i} \left(\sum\limits_{j=1}^{q}f_{v}^{(j)}(\textbf{x}_{i})\right) \right]		\\
 &= \frac{1}{N}\sum\limits_{j=1}^{q} \mathbb{E}_{\sigma}\left[ \sup\limits_{f_{v}^{(1)}\in\mathbb{F}_{1}, ..., f_{v}^{(q)}\in \mathbb{F}_{q}} \sum\limits_{i=1}^{N}\sigma_{i} f_{v}^{(j)}(\textbf{x}_{i}) \right] \\
 &= \sum\limits_{i=1}^{q} \mathcal{\hat{R}}_{S}(\mathbb{F}_{i}).
 \end{split}
\end{equation*}
\end{lemma}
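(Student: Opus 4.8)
The statement is the additivity of empirical Rademacher complexity under Minkowski sums of hypothesis classes, so the plan is to unwind the definitions and exploit the fact that the supremum over a Cartesian product of a sum whose $j$-th summand depends only on the $j$-th coordinate splits into a sum of suprema. First I would record that, by definition of the sum class $\sum_{i=1}^{q}\mathbb{F}_{i}$, every element is of the form $\sum_{j=1}^{q} f_{v}^{(j)}$ with $f_{v}^{(j)}\in\mathbb{F}_{j}$; substituting this into the definition of $\hat{\mathcal{R}}_{S}$ immediately yields the first displayed equality, where the supremum now ranges over the tuple $(f_{v}^{(1)},\dots,f_{v}^{(q)})\in\mathbb{F}_{1}\times\cdots\times\mathbb{F}_{q}$.

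Next I would swap the two finite summations inside the supremand, writing $\sum_{i=1}^{N}\sigma_{i}\big(\sum_{j=1}^{q} f_{v}^{(j)}(\textbf{x}_{i})\big)=\sum_{j=1}^{q}\big(\sum_{i=1}^{N}\sigma_{i} f_{v}^{(j)}(\textbf{x}_{i})\big)$, interpreting $\sigma_{i}f_{v}^{(j)}(\textbf{x}_{i})$ through the $\textbf{1}$-inner-product convention used throughout (as in Eq.~(\ref{eq:rsf})). Since the feasible set is a product and the $j$-th term of this outer sum depends only on $f_{v}^{(j)}$, the supremum distributes over the outer sum: $\sup_{(f_{v}^{(1)},\dots,f_{v}^{(q)})}\sum_{j=1}^{q} g_{j}(f_{v}^{(j)})=\sum_{j=1}^{q}\sup_{f_{v}^{(j)}\in\mathbb{F}_{j}} g_{j}(f_{v}^{(j)})$. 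This identity holds pointwise in $\boldsymbol{\sigma}$, so applying it inside the expectation and then using linearity of $\mathbb{E}_{\boldsymbol{\sigma}}$ to pull the finite sum over $j$ outside gives the second displayed equality; recognizing each resulting term as $N\,\hat{\mathcal{R}}_{S}(\mathbb{F}_{j})$ gives the third.

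The only point requiring a word of care — and it is the ``main obstacle'' only in a nominal sense — is the distribution of the supremum over the outer sum: it is valid precisely because there are no coupling constraints between the classes $\mathbb{F}_{1},\dots,\mathbb{F}_{q}$ (the constraint region is a genuine Cartesian product), and it is applied before taking the expectation so that each $\sup$ is over a fixed $\boldsymbol{\sigma}$; measurability of the supremum in $\boldsymbol{\sigma}$ is standard. No analytic estimates, attainedness of the suprema, or probabilistic tools are needed — the argument is purely the combination of reindexing a double sum, the product structure of the hypothesis set, and linearity of expectation.
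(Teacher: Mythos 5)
Your proposal is correct and follows essentially the same route as the paper, which in fact presents this lemma only as the displayed chain of equalities without a separate proof: the content is exactly the reindexing of the double sum, the observation that a supremum of a sum of decoupled terms over a Cartesian product splits into a sum of suprema, and linearity of expectation. You have simply made explicit the one step the paper leaves implicit (the product structure justifying the exchange of sup and sum), which is the right point to flag.
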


\section{}
\begin{figure}[htbp]
\centerline{\epsfig{figure=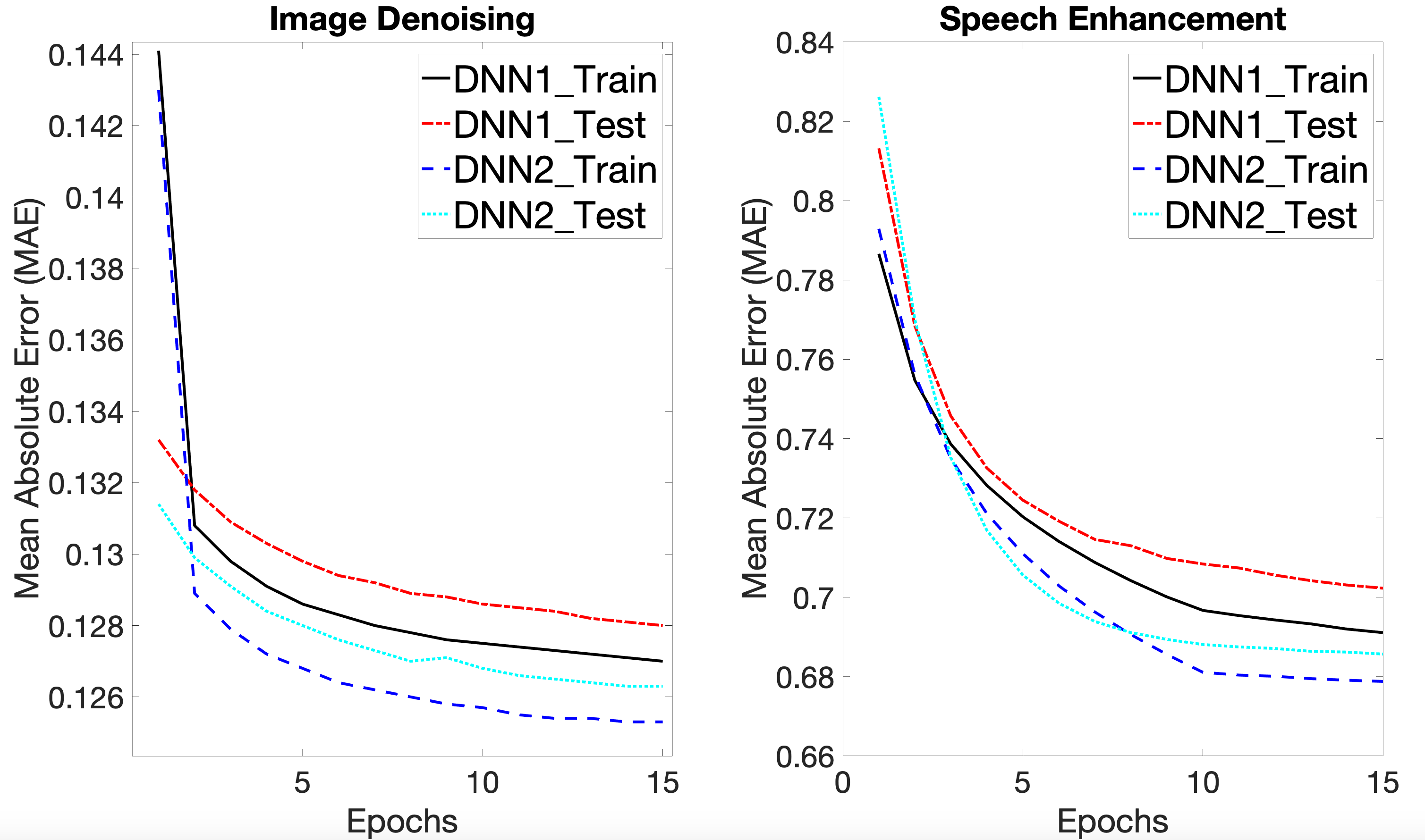, width=85mm}}
\caption{{\it Training and testing MAE curves over epochs in the two experiments.}}
\label{fig:rate1}
\end{figure}
Figure~\ref{fig:rate1} illustrates both training and testing curves of MAE over epochs in our experiments of image de-noising and speech enhancement. The simulations suggest that DNN2 with deeper architectures consistently achieves lower training and testing MAE values than DNN1 over epochs. When the update of model parameters gets converged, DNN2 finally attains the best performance on the two experiments. 

\ifCLASSOPTIONcaptionsoff
  \newpage
\fi

\bibliographystyle{IEEEtran}
\bibliography{speech}

\begin{IEEEbiography}[{\includegraphics[width=1in,height=1.25in,clip,keepaspectratio]{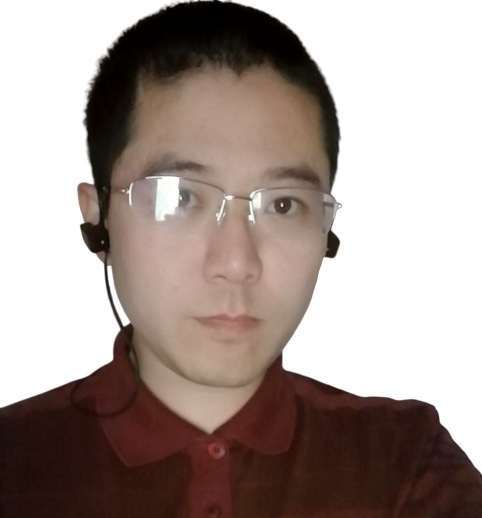}}]{Jun Qi} is a Ph.D. candidate at School of Electrical and Computer Engineering, Georgia Institute of Technology, Atlanta. Previously, he completed a graduate study in Electrical Engineering with an MSEE at the University of Washington, Seattle in 2017 and another Master of Engineering at Tsinghua University, Beijing in 2013. Also, he was a research intern in Deep Learning Technology Center (DLTC) at Microsoft Research, Redmond, WA in 2017, a research intern at Tencent American AI Lab, Bellevue, WA in 2019, and a graduate research intern at Mitsubishi Electric Research Laboratory (MERL), Cambridge, MA in 2020. His research focuses on (1) Leveraging Non-Convex Optimization and Statistical Learning Theory for Analyzing Deep Learning based Signal Processing; (2) Tensor Decomposition and Submodular Optimization applied to Speech and Natural Language Processing; (3) Theoretical Reinforcement Learning and Applications in Quantitative Finance. 
\end{IEEEbiography}

\begin{IEEEbiography}[{\includegraphics[width=1in,height=1.25in,clip,keepaspectratio]{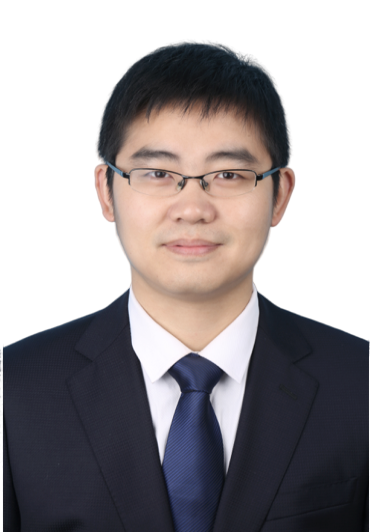}}]{Jun Du} received the B.Eng. and Ph.D. degrees from the Department of Electronic Engineering and Information Science, University of Science and Technology of China (USTC) in 2004 and 2009, respectively. From 2004 to 2009, he was with iFlytek Speech Lab of USTC. During the above period, he was am Intern at Microsoft Research Asia (MSRA), Beijing. In 2007, he  worked as a Research Assistant for 6 months  at the University of Hong Kong. From July 2009 to June 2010, he was with  iFlytek Research. From July 2010 to January 2013, he joined MSRA as an Associate Researcher. Since February 2013, he has been with the National Engineering Laboratory for Speech and Language Information Processing (NEL-SLIP) of USTC.
\end{IEEEbiography}

\begin{IEEEbiography}[{\includegraphics[width=1in,height=1.25in,clip,keepaspectratio]{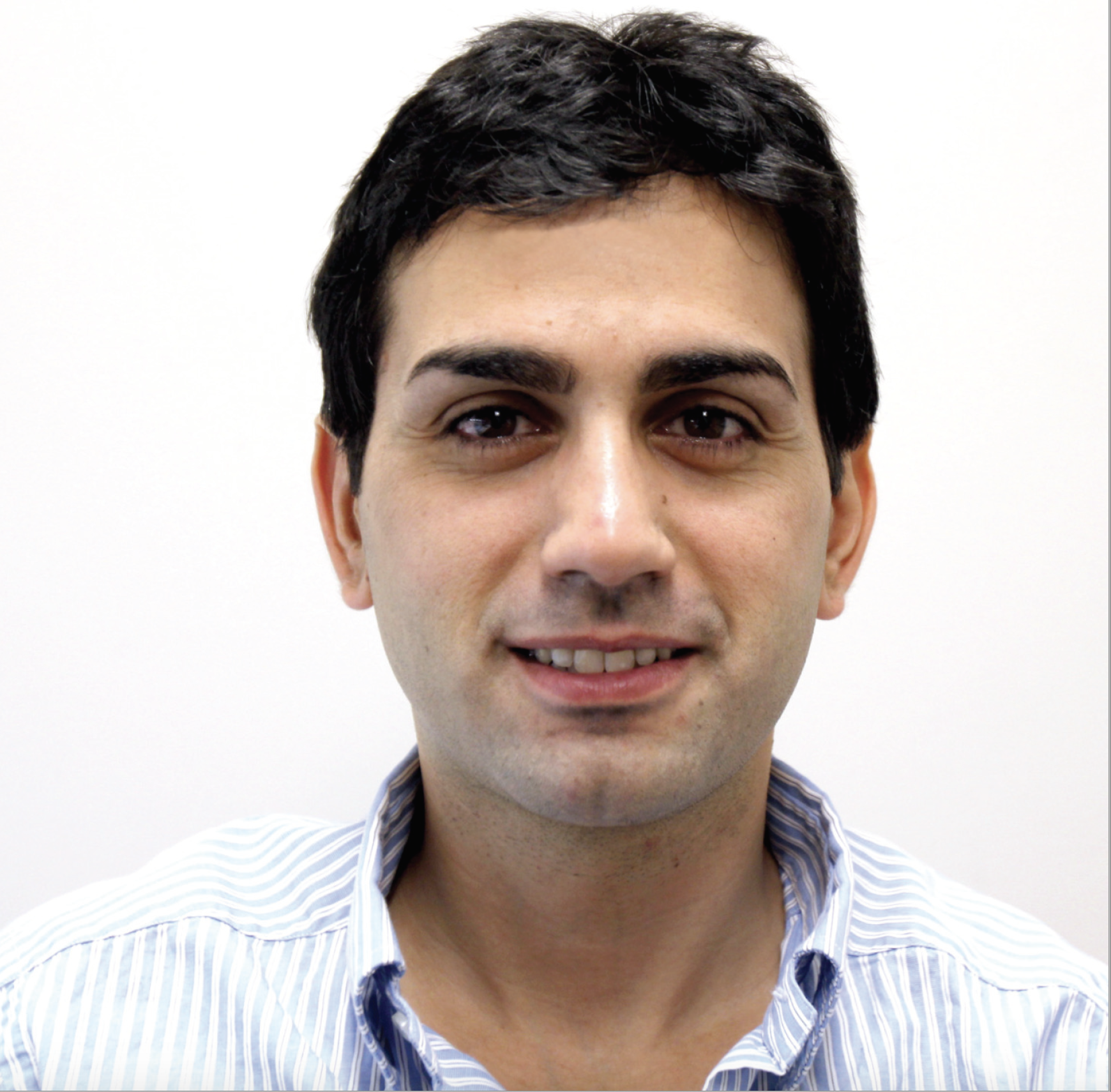}}]{Sabato Marco Siniscalchi} is a Professor at the University of Enna  and  affiliated with the Georgia Institute of Technology. He received his Laurea and Doctorate degrees in Computer  Engineering from the University of Palermo, Italy, in 2001 and 2006, respectively.  In 2006, he was a Post Doctoral Fellow at the Georgia Institute of Technology,USA. From 2007 to 2009, he joined the Norwegian University of Science and Technology (NTNU) as a Research Scientist. From 2010 to 2015, he was an Assistant Professor, first, and an Associate Professor, after, at the University of Enna. From 2017 to 2018, he joined as s Senior Speech Researcher the Siri Speech Group, Apple Inc., Cupertino CA, USA.
\end{IEEEbiography}

\begin{IEEEbiography}[{\includegraphics[width=1in,height=1.25in,clip,keepaspectratio]{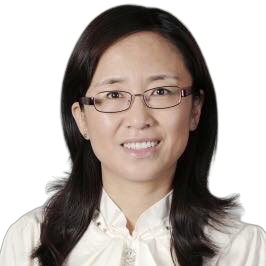}}]{Xiaoli Ma} (S'10--M'16) is Professor in School of Electrical and Computer Engineering at Georgia Institute of Technology, Atlanta, GA, USA. She is an IEEE Fellow for her contributes to “block transmissions over wireless fading channels.” Her research interests are in the areas of signal processing for communications and networks, signal estimation algorithms, coding theory, wireless communication theory, and sensor and ad hoc networks. Ma is a senior area editor for IEEE Signal Processing Letters and Elsevier Digital Signal Processing and has been an associate editor for the IEEE Transactions on Wireless Communications and Signal Processing Letters. Her recent research interests rely on intelligent wireless communication, decentralized networks, and sensor networks.
\end{IEEEbiography}

\begin{IEEEbiography}[{\includegraphics[width=1in,height=1.25in,clip,keepaspectratio]{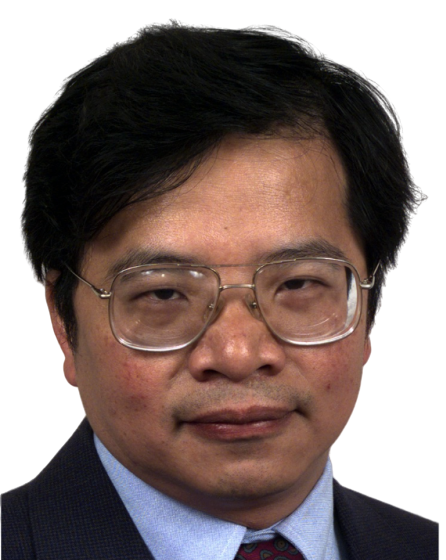}}]{Chin-Hui Lee}
is a professor at School of Electrical and Computer Engineering, Georgia Institute of Technology. Before joining academia in 2001, he had 20 years of industrial experience ending in Bell Laboratories, Murray Hill, New Jersey, as a Distinguished Member of Technical Staff and Director of the Dialogue Systems Research Department. Dr. Lee is a Fellow of the IEEE and a Fellow of ISCA. He has published over 400 papers and 30 patents, and was highly cited for his original contributions with an h-index of 66. He received numerous awards, including the Bell Labs President Gold Award in 1998. He won the SPS 2006 Technical Achievement Award for Exceptional Contributions to the Field of Automatic Speech Recognition.
\end{IEEEbiography}

\end{document}